\documentclass[a4paper,11pt]{article}

\usepackage[margin=1.2in]{geometry}

\usepackage{amsmath}
\usepackage{algorithmic}
\usepackage{algorithm}
\usepackage{amsthm}
\usepackage{amsfonts}
\usepackage{comment}
\usepackage{graphicx}
\usepackage{hyperref}
\usepackage{color}
\usepackage{subcaption}

\newtheorem{theorem} {Theorem}
\newtheorem{lemma} {Lemma}

\newtheorem{assumption} {Assumption}

\def\q{{\mathbf{q}}}
\def\x{{\mathbf{x}}}

\def\v{{\mathbf{v}}}
\def\z{{\mathbf{z}}}
\def\w{{\mathbf{w}}}
\def\y{{\mathbf{y}}}

\def\X{{\mathbf{X}}}

\def\M{{\mathbf{M}}}

\def\V{{\mathbf{V}}}

\def\W{{\mathbf{W}}}

\def\Q{{\mathbf{Q}}}

\def\D{{\mathbf{D}}}

\newcommand{\mS}{\mathcal{S}}

\newcommand{\mD}{\mathcal{D}}
\newcommand{\E}{\mathbb{E}}

\newcommand{\trace}{\textrm{Tr}}

\newcommand{\reals}{\mathbb{R}}

\title{On the Regret Minimization of Nonconvex Online Gradient Ascent for Online PCA}
%\title{On the Convergence of Nonconvex Online Gradient Ascent for Online PCA}
\date{}
\author{Dan Garber \\
Technion - Israel Institute of Technology \\
{\small{dangar@technion.ac.il}}}

\begin{document}

 \maketitle

\begin{abstract}
Non-convex optimization with global convergence guarantees is gaining significant interest in machine learning research in recent years. However, while most works consider either offline settings in which all data is given beforehand, or simple online stochastic i.i.d. settings, very little is known about non-convex optimization for adversarial online learning settings.
In this paper we focus on the problem of Online Principal Component Analysis in the regret minimization framework. For this problem, all existing regret minimization algorithms for the fully-adversarial setting are based on a positive semidefinite convex relaxation, and hence require quadratic memory and SVD computation (either thin of full) on each iteration, which amounts to at least quadratic runtime per iteration.
This is in stark contrast to a corresponding stochastic i.i.d. variant of the problem, which was studied extensively lately, and admits very efficient gradient ascent algorithms that work directly on the natural non-convex formulation of the problem, and hence require only linear memory and linear runtime per iteration. This raises the question: \textit{can non-convex online gradient ascent algorithms be shown to minimize regret in online adversarial settings?}

In this paper we take a step forward towards answering this question. We introduce an \textit{adversarially-perturbed spiked-covariance model} in which, each data point is assumed to follow a fixed stochastic distribution with a non-zero spectral gap in the covariance matrix,  but is then perturbed with some adversarial vector. This model is a natural extension of a well studied standard \textit{stochastic} setting that allows for non-stationary (adversarial) patterns to arise in the data  and hence, might serve as a significantly better approximation for real-world data-streams.
We show that in an interesting regime of parameters, when the non-convex online gradient ascent algorithm is initialized with a ``warm-start" vector, it provably minimizes the regret with high probability. We further discuss the possibility of computing such a ``warm-start" vector, and also the use of regularization to obtain fast regret rates. Our theoretical findings are supported by empirical experiments on both synthetic and real-world data.
\end{abstract}

\section{Introduction}
Nonconvex optimization is ubiquitous in contemporary machine learning, ranging from optimization over sparse vectors or low-rank matrices to training Deep Neural Networks. While traditional (yet still highly active) research on nonconvex optimization focuses mostly on efficient convergence to stationary points, which in general need not even be a local minima, let alone a global one (e.g., \cite{Agarwal17, Natasha2, allen2016variance, CarmonDHS17}), a more-recent line of work focuses on proving convergence to global minima, usually under certain simplifying assumptions that on one hand make the nonconvex problem tractable, and on the other hand, are sufficiently reasonable in some scenarios of interest (see for instance \cite{de2015global, ge2016matrix, bhojanapalli2016, arora2014provable, jin2016provable} to name only a few).
One of the most studied and well known nonconvex optimization problems in machine learning underlies the fundamental task of \textit{Principal Component Analysis} (PCA) \cite{Pearson, Hotelling, Jolliffe2011}, in which, given a set of $N$ vectors in $\reals^d$, one wishes to find a $k$-dimensional subspace for $k<<d$, such that the projections of these vectors onto this subspace is closest in square-error to the original vectors. It is well known that the optimal subspace corresponds to the span of the top $k$ eigenvectors of the covariance matrix of the data-points. Henceforth, we focus our discussion to the case $k=1$, i.e., extracting the top principal component.
Quite remarkably, while this problem is non-convex (since extracting the top eigenvector amounts to \textit{maximizing} a convex function over the unit Euclidean ball), a well known iterative algorithm known as \textit{Power Method} (or Power Iterations, see for instance \cite{golub2012matrix}), which simply starts with a random unit vector and repeatedly applies the covariance matrix to it (and then normalizes the result to have unit norm), converges to the global optimal solution rapidly. The convergence guarantee of the PM, can also be shown to imply that the nonconvex projected gradient ascent method with random initialization and a fixed step-size also converges to the top principal component. \footnote{This follows since the steps of PGA can be rewritten as applying PM steps to a scaled and shifted version of the original matrix that preserves the leading eigenvector.}

In a recent line of work, the convergence of non-convex gradient methods for PCA was extended to a natural online stochastic i.i.d. setting of the problem, in which, given a stream of data points sampled i.i.d. from a fixed (unknown) distribution, the goal is to converge to the top eigenvector of the covariance  matrix of the underlying distribution as the sample size increases, yielding algorithms that require only linear memory (i.e., do not need to store the entire sample or large portions of it at any time) and linear runtime to process each data point, see for instance \cite{Mitliagkas13, Balsubramani13, Shamir16a, jain2016matching, Allen2017,li2018near, xu2018accelerated}.

In a second recent line of research, researchers have considered Online PCA as a sequential decision problem in the adversarial framework of regret minimization (aka online learning, see for instance the introductory texts \cite{CesaBook, Hazan16, SSS12}), e.g., \cite{Warmuth06a, Warmuth06b, Warmuth13, Dwork14, G15online, allen2017follow}. In this framework, for each data-point, the online algorithm is required to predict a unit vector (i.e., a subspace of dimension one, recall we are in the case $k=1$) \textit{before} observing the data-point, and the goal is to minimize regret which is the difference between the square-error of the predictions made and the square-error of the principal component of the entire sequence of data. Different from the i.i.d. stochastic setting, in this framework, the data may be completely arbitrary (though assumed to be bounded in norm), and need not follow a simple generative model. Formally, the regret is given by
\begin{eqnarray*}
\textrm{regret} := \sum_{i=1}^N\Vert{\x_i - \w_i\w_i^{\top}\x_t}\Vert_2^2 - \min_{\Vert{\w}\Vert_2=1}\sum_{i=1}^N\Vert{\x_i - \w\w^{\top}\x_t}\Vert_2^2,
\end{eqnarray*}
where $\{\x_i\}_{i\in[N]}\subset\reals^d$ is the sequence of data points, and $\{\w_i\}_{i\in[N]}$ is the sequence of predictions made by the online algorithm. Using standard manipulations, it can be shown that %the regret equals
\begin{eqnarray*}
\textrm{regret} = \max_{\Vert{\w}\Vert_2=1}\sum_{i=1}^N(\w^{\top}\x_t)^2 - \sum_{i=1}^N(\w_i^{\top}\x_i)^2 =\lambda_1\left({\sum_{t=1}^N\x_i\x_i^{\top}}\right) - \sum_{t=1}^N(\w_i^{\top}\x_i)^2,
\end{eqnarray*}
where $\lambda_1(\cdot)$ denotes the largest (signed) eigenvalue of a real symmetric matrix.

Naturally, the arbitrary nature of the data in the online learning setting, makes the problem much more difficult than the stochastic i.i.d. setting. Notably, all current algorithms which minimize regret in this fully-adversarial setting cannot directly tackle the natural nonconvex formulation of the problem, but consider a well known (tight) convex relaxation, which ``lifts" the decision variable from the unit Euclidean ball in $\reals^d$ to the set of all $d\times d$ positive semidefinite matrices of unit trace (aka the spectrahedron). While this reformulation allows to obtain regret-minimizing algorithms in the online adversarial settings (since the problem becomes convex), they are dramatically less efficient than the standard nonconvex gradient methods. In particular, all such algorithms require quadratic memory (i.e., $O(d^2)$), and require either a thin or full-rank SVD computation of a full-rank matrix to process each data point, which amounts to at least quadratic runtime per data point (for non trivially-sparse data), see \cite{Warmuth06a, Warmuth06b, Warmuth13, Dwork14, G15online, allen2017follow}. This phenomena naturally raises the question:
\begin{center}
\textit{Can Nonconvex Online Gradient Ascent be shown to minimize regret for the Online PCA problem?}
\end{center}
While in this paper we do not provide a general answer (either positive or negative),  we do take a step forward towards understanding the applicability of nonconvex gradient methods to the Online PCA problem. We introduce a ``semi-adversarial" setting, which we refer to as \textit{adversarially-perturbed spiked-covariance model}, which assumes the data follows a standard i.i.d. stochastic distribution with a covariance matrix that admits a non-zero spectral gap, however, each data point is then perturbed by some arbitrary, possibly adversarial, vector of non-trivial magnitude. We view this model as a natural extension of the standard stochastic model (which was studied extensively in recent years, see references above) due to its ability to capture arbitrary (adversarial) patterns in the data. Hence, we believe  the suggested model might provide a much better approximation for real-world data streams.
We formally prove that in a certain regime of parameters, which concerns both the spectral properties of the distribution covariance and the magnitude of adversarial perturbations, given a ``warm-start" initialization which is sufficiently correlated with the top principal component of the stochastic distribution, the natural nonconvex online gradient ascent algorithm guarantees an $\tilde{O}(\sqrt{N})$ regret bound with high probability. In particular, the algorithm requires only $O(d)$ memory and $O(d)$ runtime per data point. We further discuss the possibilities of computing such a "warm-start" vector (i.e., initializing from a "cold-start"). Moreover, we explore the possibility of adding regularization to the algorithm, which as we show, under the same assumptions on the data and the same "warm-start" initialization, allows to obtain a $\textrm{poly}(\log{N})$ regret bound, still using only $O(d)$ memory and runtime per data point. \footnote{Note that in the fully-adversarial setting (i.e., there is no stochastic component), there is a  $\Omega(\sqrt{N})$ lower bound on the regret, see for instance \cite{Warmuth06b}.}

 Finally, we present empirical experiments with both synthetic and real-world datasets which complement our theoretical analysis.

%Our model is motivated by the fact that online learning can be criticized as being too pessimistic, favoring the ability to handle (almost) completely adversarial data over computational tractability. We believe that our semi-adversarial model may be an appealing approximation for real-world data-sets, building on a stationary baseline yet allowing for non-trivial patterns to arise. The assumption on the availability of a ``warm-start" vector is motivated by the idea that such initialization may be possible to obtain using existing data available prior to the deployment of the algorithm. 

\subsection{Related work}
Besides the works \cite{Warmuth06a, Warmuth06b, Warmuth13, Dwork14, G15online, allen2017follow} mentioned above, which consider online algorithms for the fully-adversarial setting, in a very recent work \cite{Arora18a}, the authors have introduced an online PCA setting in which the data-points (the vectors $\{\x_i\}_{i\in[n]}$) are drawn i.i.d. from a fixed (unknown) distribution, however the feedback observed by the algorithm on each round $i$ is a perturbed version $\x_i+\y_i$, where $\y_i$ is an arbitrary (e.g., adversarial) noise. The authors show that under the conditions $\max_{i\in[N]}\Vert{\x_i}\Vert_2 \leq 1$ and $\sum_{i=1}^N\Vert{\y_i}\Vert_2+\Vert{\y_i}\Vert_2^2 \leq \sqrt{N}$, the natural nonconvex online gradient ascent guarantees $O(\sqrt{N})$ regret with high probability (see Theorem 3.2 in \cite{Arora18a}) w.r.t. the original sequence $\{\x_i\}_{i\in[N]}$ (i.e., without the noise). While this model which combines stochastic and adversarial components is somewhat similar to ours, there are three major differences. First, in our setting the adversarial component is considered part of the data, and hence is included in the definition of the regret, while in \cite{Arora18a}, the adversarial component is considered only as noisy feedback to the algorithm and is not considered part of the data (and corresponding regret bound). Second, while in \cite{Arora18a} it is required that the sum of magnitudes of the adversarial components is only sublinear in the sequence length ($\sqrt{N}$, as discussed above), which intuitively makes these components negligible in the regret analysis, in this work, as we discuss in the next section, we allow each adversarial component to be of constant magnitude, independent of the sequence length, and hence overall, the noisy components are non negligible in the regret analysis. Third, while \cite{Arora18a} only gives an $O(\sqrt{N})$ regret bound, we show that with proper regularization (which does not change the overall complexity of the algorithm), a regret bound of $\textrm{poly}(\log{N})$ can be obtained.

In another recent work \cite{Arora18b}, the authors consider the application of the online mirror decent algorithm to the \textit{stochastic} online PCA problem (i.e., when the data points are sampled i.i.d. from a fixed unknown distribution). This mirror descent algorithm works on the convex semidefinite relaxation of the problem discussed above (and hence requires a potentially-expensive matrix factorization on each iteration). The authors show that under a spectral gap assumption in the distribution covariance matrix, adding strongly-convex regularization (w.r.t. the Euclidean norm) to the algorithm can improve the regret bound from $O(\sqrt{N})$ to $\textrm{poly}(\log{N})$. In this work we draw inspiration from this observation, and show that also in our model which combines stochastic data with adversarial data, the addition of a regularizing term can improve the regret bound of the nonconvex online gradient method from $\tilde{O}(\sqrt{N})$ to $\textrm{poly}(\log{N})$. 

\section{Assumptions and Results}

In this section we formally introduce our assumptions and main result. 
As discussed in the introduction, since our aim is make progress on a highly non-trivial problem of providing global convergence guarantees for a non-convex optimization algorithm in an online adversarial setting, our results do not hold for arbitrary (bounded) data, as is usually standard in \textit{convex} online learning settings, but only for a more restricted family of input streams, namely those which follow a model we refer to in this paper as the \textit{adversarially-perturbed spiked-covariance model}. Next we formally introduce this model.

\subsection{Adversarially-Perturbed Spiked-Covariance Model}
Throughout the paper we assume the data, i.e., the vectors $\{\x_t\}_{t\in[N]}$, satisfy the following assumption.

\begin{assumption}[Perturbed Spiked Covariance Model]\label{ass:dist}
We say a sequence of $N$ vectors $\{\x_t\}_{t\in[N]}\subset\reals^d$ satisfies Assumption \ref{ass:dist}, if for all $t\in[N]$, $\x_t$ can be written as $\x_t = \q_t + \v_t$, where $\{\q_t\}_{t\in[N]}$ are sampled i.i.d. from a distribution $\mD$ and  $\{\v_t\}_{t\in[N]}$ is a sequence of arbitrary bounded vectors such that the following conditions hold:
\begin{enumerate}
\item
the vectors $\{\v_t\}_{t\in[N]}$ all lie in a Euclidean ball of radius $V$ centered at the origin, i.e., $\max_{t\in[N]}\Vert{\v_t}\Vert_2 \leq V$
\item
the support of $\mD$ is contained in a Euclidean ball of radius $R$ centered at the origin, i.e., $\sup_{\q\in\textrm{support}(\mD)}\Vert{\q}\Vert_2\leq R$
\item
$\mD$ has zero mean, i.e., $\E_{\q\sim\mD}[\q] = \mathbf{0}$
\item
the covariance matrix $\Q:=\E_{\q\sim\mD}[\q\q^{\top}]$, admits an eigengap $\delta(\Q) := \lambda_1(\Q)-\lambda_2(\Q)$ which satisfies $\delta(\Q) \geq V\sqrt{2\lambda_1(\Q)+V^2} + \varepsilon$, for some $\varepsilon>0$.
\end{enumerate}
\end{assumption}

We now make a few remarks regarding Assumption \ref{ass:dist}. Item (1) assumes that the adversarial perturbations are bounded which is standard in the online learning literature, Item (2) is also a standard assumption, which is used to apply standard concentration arguments for sums of i.i.d random variables. Item (3), i.e., the assumption that the distribution as zero mean, while often standard, is not mandatory in general for our analysis technique to hold, however since it greatly simplifies the analysis we make it. 

To better understand Item (4), it helps to think of $\delta(\Q),V^2,\varepsilon$ as quantities proportional to $\lambda_1(\Q)$, i.e., consider $\delta(\Q) = c_{\delta}\lambda_1(\Q)$, $V^2 = c_V\lambda_1(\Q)$, $\varepsilon = c_{\varepsilon}\lambda_1(\Q)$, for some universal constants $c_{\delta},c_V,c_{\varepsilon}\in(0,1)$. Now, Item (4) in the assumption boils down the the condition
$c_{\delta} \geq \sqrt{2c_V + c_V^2} + c_{\varepsilon}$\footnote{
or alternatively,
$c_V \leq \sqrt{1 + (c_{\delta} - c_{\varepsilon})^2} - 1 \leq (c_{\delta} - c_{\varepsilon})^2/2$,
where the last inequality follows from the standard inequality $\sqrt{a+b} \leq \sqrt{a} + \frac{b}{2\sqrt{a}}$.}. That is, the eigengap in the covariance $\Q$ needs to dominate the adversarial perturbations in a certain way. Note that in principle, this regime of parameters still allows the ratio $V/R$ (i.e., ratio between maximal magnitude of adversarial component and maximal magnitude of stochastic component) to even be a universal constant. It is also important to note that the assumption of a non-negligeble eigengap in the covariance matrix is natural for PCA and is often observed in practice. We further discuss this assumption after presenting our main theorem - Theorem \ref{thm:main} in the following subsection.

\paragraph{Connection with stochastic i.i.d. models:} note that when setting $V = 0$ in Assumption \ref{ass:dist} (i.e., there is no adversarial component), our setting reduces to the well studied standard stochastic i.i.d. setting. In particular, in this case item (4) in Assumption \ref{ass:dist} simply reduces to the standard assumption in this model that the covariance admits an eigengap bounded away from zero ($\delta(\Q) \geq \varepsilon$). Hence, the model introduced above can be seen as a natural, yet highly non-trivial, extension of the standard stochastic model to a ``more expressive" online adversarial model, that might serve as a better approximation for real-world data-streams in online-computation environments. 

\subsection{Algorithm and Convergence Result}

For simplicity of the analysis we consider the data as arriving in blocks of length $\ell$, where $\ell$ is a parameter to be determined later. Towards this end, we assume that $N = T\ell$ for some integer $T$ and we consider prediction in $T$ rounds, such that on each round $t\in[T]$, the algorithm predicts on all $\ell$ vectors in the $t$th block, which we denote by $\x_t^{(1)},\dots,\x_t^{(\ell)}$. It is important to emphasize that, while our algorithm considers the original data in blocks, it requires only $O(d)$ memory and $O(d)$ time to process each data point $\x_t^{(i)}$ for any $t\in[T], i\in[\ell]$. 

Our algorithm, which we refer to as \textit{nonconvex online gradient ascent}, is given below (see Algorithm \ref{alg:opm}). Our algorithm comes in two variants, one without additional regularization and one with. As can be seen the non-regularized version is equivalent to applying the Online Gradient Ascent algorithm \cite{zinkevich03, Hazan16} with the payoff function $f_t(\hat{\w}) := \frac{1}{2}\sum_{i=1}^{\ell}(\w^{\top}\x_t^{(i)})^2$ on each round $t\in[T]$, where $\w$ is constrained to be a unit vector (though here we recall that the payoff function is not concave in $\w$, and the feasible set is not convex). 
The regularized version is similar, but considers the regularized payoff function $f^{\alpha}_t(\w) := \frac{1}{2}\sum_{i=1}^{\ell}(\w^{\top}\x_t^{(i)})^2 - \frac{\alpha}{2}\Vert{\w}\Vert_2^2$. \footnote{While at this point it may not be immediately clear how the introduction of the regularizer helps, since we are optimizing over the unit sphere (hence the regularizer has the same value for all feasible points), it will be apparent in the analysis that it allows to obtain faster rates, similarly to the way that adding a strongly convex regularizer enables to obtain faster rates in the standard setting of online convex optimization \cite{Hazan16}.}

\begin{algorithm}[H]
\caption{Nonconvex Online Gradient Ascent for Online PCA}
\label{alg:opm}
\begin{algorithmic}[1]
\STATE input: unit vector $\hat{\w}_1$, sequence of positive learning rates $\{\eta_t\}_{t\geq 1}$, regularization parameter $\alpha \geq 0$.
\FOR{$t=1\dots T$}
\STATE predict vector $\hat{\w}_t$
\STATE observe $\ell$ vectors $\x_t^{(1)},\dots,\x_t^{(\ell)}$ and payoff $\sum_{i=1}^{\ell}(\hat{\w}_t^{\top}\x_t^{(i)})^2$
\STATE compute the update
\begin{eqnarray*}
&\hat{\w}_{t+1} \gets \frac{\hat{\w}_t + \eta_t\sum_{i=1}^{\ell}\x_t^{(i)}\x_t^{(i)\top}\hat{\w}_t}{\Vert{\hat{\w}_t + \eta_t\sum_{i=1}^{\ell}\x_t^{(i)}\x_t^{(i)\top}\hat{\w}_t}\Vert_2} &\quad \{\textrm{without regularization}\} \\
& \textrm{OR} &\\
&\hat{\w}_{t+1} \gets \frac{(1-\eta_t\alpha)\hat{\w}_t + \eta_t\sum_{i=1}^{\ell}\x_t^{(i)}\x_t^{(i)\top}\hat{\w}_t}{\Vert{(1-\eta_t\alpha)\hat{\w}_t + \eta_t\sum_{i=1}^{\ell}\x_t^{(i)}\x_t^{(i)\top}\hat{\w}_t}\Vert_2} &\quad \{\textrm{with regularization}\} 
\end{eqnarray*}
%$\hat{\w}_{t+1} \gets \left({\hat{\w}_t + \eta_t\sum_{i=1}^{\ell}\x_t^{(i)}\x_t^{(i)\top}\hat{\w}_t}\right) / \Vert{\hat{\w}_t + \eta_t\sum_{i=1}^{\ell}\x_t^{(i)}\x_t^{(i)\top}\hat{\w}_t}\Vert$
\ENDFOR
\end{algorithmic}
\end{algorithm}

The following two theorem state our main results.

\begin{theorem}\label{thm:main}[convergence of Algorithm \ref{alg:opm} without regularization ($\alpha =0$) and constant learning rate]
Consider a sequence of vectors $\{\x_t\}_{t\in[N]}$ which follows Assumption \ref{ass:dist} and fix $p\in(0,1)$. For $N$ large enough, there exists an integer $\ell = O\left({\frac{R^4\lambda_1(\Q)^2}{\left({\delta(\Q)^2 - V^2(2\lambda_1(\Q)+V^2)}\right)^2}\log\frac{dN}{p}}\right)$,
such that applying Algorithm \ref{alg:opm} with blocks of length $\ell$ and initialization $\hat{\w}_1$ which satisfies 
\begin{eqnarray*}
(\hat{\w}_1^{\top}\x)^2 \geq 1- \frac{\delta(\Q) - V^2}{2\lambda_1(\Q)}\left({1-\frac{\delta(\Q)}{9\lambda_1(\Q)}}\right),
\end{eqnarray*}
where $\x$ is the leading eigenvector of $\Q$ (as defined in Assumption \ref{ass:dist}), and with a constant learning rate $\eta_t = \eta = \frac{1}{\sqrt{T}\ell(R+V)^2}$ for all $t\in[T]$ and without regularization (i.e., $\alpha=0$), guarantees that with probability at least $1-p$, the regret is upper-bounded by $$O\left({\sqrt{N\log\frac{dN}{p}}\frac{\lambda_1(\Q)R^4}{\delta(\Q)^2-V^2(V^2+\lambda_1(\Q))}}\right).$$
\end{theorem}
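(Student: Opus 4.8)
The plan is to exploit the structure of Online Gradient Ascent while simultaneously tracking, via concentration, how close each iterate $\hat{\w}_t$ stays to the leading eigenvector $\x$ of $\Q$. The regret decomposes as $\lambda_1(\sum_t \sum_i \x_t^{(i)}\x_t^{(i)\top}) - \sum_t f_t(\hat{\w}_t)$ where $f_t(\w)=\tfrac12\sum_i(\w^\top\x_t^{(i)})^2$ (up to the factor $2$ absorbed in constants). The first observation is that the block-aggregated matrix $\A_t := \sum_{i=1}^\ell \x_t^{(i)}\x_t^{(i)\top}$ is, by Assumption \ref{ass:dist} and a matrix Bernstein/Hoeffding bound, within spectral-norm distance $O(\ell\sqrt{\log(dN/p)/\ell}\cdot R(R+V) + \ell V(2\sqrt{\lambda_1(\Q)}+V))$ of $\ell$ times a matrix whose top eigengap is at least $\delta(\Q) - V\sqrt{2\lambda_1(\Q)+V^2} \ge \varepsilon > 0$; the choice of $\ell$ in the theorem statement is precisely calibrated so that this perturbation is a small enough fraction of $\ell\varepsilon$ (say $\ell\varepsilon/c$ for a suitable constant $c$) with probability $\ge 1-p/T$ per block, hence $\ge 1-p$ over all blocks by a union bound.

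The second ingredient is a geometric/invariance argument: the normalized gradient-ascent step $\hat{\w}_{t+1}\propto \hat{\w}_t + \eta \A_t\hat{\w}_t$ is exactly a shifted-and-scaled power-method step applied to $\I + \eta\A_t$, which shares the leading eigenvector of the ``clean'' matrix $\Q$ up to the perturbation controlled above. I would therefore prove an invariant of the form: if $(\hat{\w}_t^\top\x)^2 \ge 1-\rho$ for $\rho := \frac{\delta(\Q)-V^2}{2\lambda_1(\Q)}(1-\frac{\delta(\Q)}{9\lambda_1(\Q)})$, then, on the good event for block $t$, also $(\hat{\w}_{t+1}^\top\x)^2 \ge 1-\rho$. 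This is the step I expect to be the main obstacle: one must show the (small) ascent step cannot decrease the correlation with $\x$ below the threshold, which requires carefully balancing (a) the contraction toward the true leading eigenvector coming from the eigengap $\varepsilon$ of the clean matrix, against (b) the possible ``pull'' of the step toward the spurious leading direction of the perturbation, bounded by the spectral-norm estimate; the constants $\frac{\delta(\Q)-V^2}{2\lambda_1(\Q)}$ and $\frac{\delta(\Q)}{9\lambda_1(\Q)}$ in the warm-start condition are exactly what make this algebra close. The small constant step size $\eta = \frac{1}{\sqrt{T}\ell(R+V)^2}$ ensures $\eta\A_t$ has operator norm $\le 1/\sqrt{T}$, which both keeps the linearization of the normalization valid and keeps the step from overshooting.

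Given the invariant, the regret is handled in two pieces. For the ``alignment loss'': on the good event, $\lambda_1(\A_t) - f_t(\hat{\w}_t) \le \lambda_1(\A_t) - \hat{\w}_t^\top\A_t\hat{\w}_t \le \rho\cdot\|\A_t\| \le O(\rho\,\ell R^2)$ using $(\hat{\w}_t^\top\x)^2\ge 1-\rho$ and that $\x$ nearly realizes $\lambda_1(\A_t)$ up to the block perturbation; summing over $t$ gives an $O(\rho N R^2)$ term — but I would instead route this through the standard OGA regret bound rather than bounding each term by $\rho$, to get the $\sqrt{N}$ dependence. Concretely, since the iterates all lie in a region where $f_t$ is ``locally well-behaved'' (its gradient has bounded norm $\le \ell(R+V)^2$, and on the invariant set the relevant comparator $\x$ is feasible up to the normalization), the OGA analysis with constant step $\eta$ yields $\sum_t(f_t(\x) - f_t(\hat{\w}_t)) \le \frac{\|\x-\hat{\w}_1\|^2}{2\eta} + \frac{\eta}{2}\sum_t\|\nabla f_t(\hat{\w}_t)\|^2 = O(\sqrt{T}\ell(R+V)^2) = O(\sqrt{N/\ell}\cdot\ell(R+V)^2)$; combining with the gap between $\sum_t f_t(\x)$ and $\lambda_1(\sum_t\A_t)$, which is again bounded by the aggregated block perturbation $O(\sqrt{NT\log(dN/p)}R(R+V))$ via concentration, and substituting the value of $\ell$, gives the claimed bound $O(\sqrt{N\log(dN/p)}\,\frac{\lambda_1(\Q)R^4}{\delta(\Q)^2-V^2(V^2+\lambda_1(\Q))})$. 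The only subtlety in importing the OGA bound is that $\hat{\w}_t$ is the normalized iterate and $f_t$ is nonconcave, so I would apply the convex OGA telescoping argument on the sphere directly — using that $\langle \nabla f_t(\hat{\w}_t), \hat{\w}_t - \x\rangle$ still telescopes against $\|\hat{\w}_{t+1}-\x\|^2 - \|\hat{\w}_t-\x\|^2$ because projection onto the sphere is nonexpansive toward $\x$ — which is valid precisely because the invariant confines all iterates to a cap around $\x$ where the nonconvexity is harmless.
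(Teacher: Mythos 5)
There are two genuine gaps in your proposal, both located in the step where you convert the telescoping argument into the claimed regret bound.

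First, the comparator. You bound $\sum_t\bigl(f_t(\x)-f_t(\hat{\w}_t)\bigr)$ and then claim that $\lambda_1\bigl(\sum_t\A_t\bigr)-\sum_tf_t(\x)$ is controlled by an $O\bigl(\sqrt{N\log(dN/p)}\,R(R+V)\bigr)$ concentration term. That fails: the adversarial components $\v_t^{(i)}$ are not mean-zero and need not concentrate away. Writing $\sum_t\A_t=N\Q+\sum_t\V_t+\sum_t\D_t$, the term $\sum_t\V_t$ can place mass $\Theta(NV^2)$ on a direction orthogonal to $\x$, so $\lambda_1(\sum_t\A_t)-\sum_t\x^{\top}\A_t\x$ can be linear in $N$; indeed the paper explicitly remarks that under the conditions of the theorem $\x$ itself can incur linear regret. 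Any correct argument must hold against an \emph{arbitrary} unit comparator $\w$ (as the paper's Lemma \ref{lem:metaAlgConv} does), not against $\x$.

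Second, the telescoping on the sphere. The payoff $f_t(\w)=\frac12\w^{\top}\X_t\w$ is \emph{convex} and is being maximized, so the linearization goes the wrong way: $f_t(\w)-f_t(\hat{\w}_t)=\langle\nabla f_t(\hat{\w}_t),\w-\hat{\w}_t\rangle+\frac12(\w-\hat{\w}_t)^{\top}\X_t(\w-\hat{\w}_t)$, and the curvature term, of order $\rho\,\ell R^2$ per block on your invariant cap, sums to $\Theta(\rho NR^2)$ — linear in $N$ since $\rho$ is a constant. So importing the convex OGA bound "directly on the sphere" does not yield $\sqrt{N}$; the nonconvexity is not harmless here. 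The paper's resolution is precisely to lift to the spectrahedron $\mS$, where the payoff $\W\bullet\X_t$ is linear in $\W=\w\w^{\top}$ and the exact telescoping argument is valid; the technical heart (Lemmas \ref{lem:projCond}, \ref{lem:rank1ProjMat}, \ref{lem:goodProj}) is showing that under the warm-start invariant the exact Euclidean projection onto $\mS$ is always rank-one, and Lemma \ref{lem:opmApprox} shows the power-method step of Algorithm \ref{alg:opm} approximates that rank-one projection to within $O(\eta^2\ell^2(R+V)^4)$ in Frobenius norm, which makes the approximation cost $\sum_t\gamma_t/\eta_t$ of order $\sqrt{T}$. Your invariant on $(\hat{\w}_t^{\top}\x)^2$ and the per-block matrix concentration are the right ingredients and do mirror the paper's Lemmas \ref{lem:goodProj} and \ref{lem:matHoff}, but without the lift to $\mS$ (or some substitute linearizing device) the regret bound does not close.
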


Theorem \ref{thm:main} roughly says that when the distribution covariance has a large-enough eigengap with respect to the adversarial perturbations (item 4 in Assumption \ref{ass:dist}), then non-convex OGA converges from a ``warm-start" with $\tilde{O}(\sqrt{N})$ regret. Intuitively, the condition on the eigengap implies that the best-in-hindsight eigenvector cannot be far from $\x$ - the leading eigenvector of the distribution covariance by more than a certain constant. Hence, Theorem \ref{thm:main} can be seen as an online ``local" convergence result. Importantly, it is not hard to show that under the conditions of the theorem, the best-in-hindsight eigenvector can also be far from both the initial vector $\hat{\w}_1$ and from $\x$ by a constant (and hence in particular both $\hat{\w}_1$ and $\x$ can incur linear regret). Hence, while our setting is strictly easier than the fully-adversarial online learning setting, it still a highly non-trivial online learning setting. In particular, all previous algorithms that provably minimize the regret under the conditions of Theorem \ref{thm:main} require quadratic memory and quadratic runtime per data-point.

Our second main result shows that the regularized version of Algorithm \ref{alg:opm} (i.e., with $\alpha >0$) can guarantee poly-logarithmic regret in $N$ under Assumption \ref{ass:dist}.

\begin{theorem}\label{thm:mainLog}[convergence of Algorithm \ref{alg:opm} with regularization ($\alpha > 0$)]
Consider a sequence of vectors $\{\x_t\}_{t\in[N]}$ which follows Assumption \ref{ass:dist} and fix $p\in(0,1)$. For $N$ large enough, there exists an integer $\ell = O\left({\frac{R^4\lambda_1(\Q)^2}{\left({\delta(\Q)^2 - V^2(\lambda_1(\Q)+V^2)}\right)^2}\log\frac{dN}{p}}\right)$, such that applying Algorithm \ref{alg:opm} with blocks of length $\ell$, regularization parameter which satisfies
\begin{eqnarray*}
\alpha = \Theta\left({\frac{\ell}{\delta(\Q)+V^2}\left({\delta(\Q)^2-V^4-2V^2\lambda_1(\Q)}\right)}\right), 
\end{eqnarray*}
and initialization $\hat{\w}_1$ which satisfies 
\begin{eqnarray*}
(\hat{\w}_1^{\top}\x)^2 \geq 1- \frac{\delta(\Q) - V^2}{2\lambda_1(\Q)}\left({\frac{9}{10}-\frac{\delta(\Q)}{9\lambda_1(\Q)}}\right),
\end{eqnarray*}
where $\x$ is the leading eigenvector of $\Q$ (as defined in Assumption \ref{ass:dist}), and with learning rate $\eta_t =  \frac{1}{\alpha{}t+T_0}$ for some $T_0\geq 0$ large enough, guarantees that with probability at least $1-p$, the regret is upper-bounded by $$O\left({\frac{R^8(\delta(\Q)+V^2)\lambda_1(\Q)^2}{\left({\delta(\Q)^2-V^4-2\lambda_1(\Q)V^2}\right)^3}\log(N)\log\left({\frac{dN}{p}}\right)}\right).$$
%\begin{eqnarray*}
%\lambda_1\left({\sum_{i=1}^N\x_i\x_i^{\top}}\right) - \sum_{t=1}^T\sum_{i=1}^{\ell}(\hat{\w}_t^{\top}\x_t^{(i)})^2=O\left({\sqrt{N\log\frac{dN}{p}}\frac{\lambda_1(\Q)R^2(R+V)^2}{\delta(\Q)^2-V^2(V^2+\lambda_1(\Q))}}\right).
%\end{eqnarray*}
\end{theorem}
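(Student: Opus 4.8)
The plan is to follow the same route as the proof of Theorem~\ref{thm:main}, but to replace the constant-step online gradient ascent regret telescoping by its strongly-convex variant, with the regularizer $\alpha$ playing the role of the strong-convexity modulus; this mirrors the passage from $O(\sqrt{T})$ to $O(\log T)$ regret in online convex optimization \cite{Hazan16} and the analogous effect observed in \cite{Arora18b}. Write $M_t := \sum_{i=1}^{\ell}\x_t^{(i)}\x_t^{(i)\top}$, so that the update in Algorithm~\ref{alg:opm} is exactly the projection onto the unit sphere of $\hat\w_t + \eta_t(M_t-\alpha\Id)\hat\w_t$, i.e.\ a gradient-ascent step for the quadratic $\w\mapsto\tfrac12\w^\top(M_t-\alpha\Id)\w$. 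The first step is a \textbf{concentration step}: by a blockwise matrix Bernstein/Hoeffding bound together with a union bound over $t\in[T]$, the choice of $\ell$ in the statement guarantees that, with probability at least $1-p$, every $\tfrac1\ell M_t$ is within spectral-norm distance $o\!\big((\delta(\Q)^2 - V^4 - 2V^2\lambda_1(\Q))/\lambda_1(\Q)\big)$ of $\Q + \tfrac1\ell\sum_i\v_t^{(i)}\v_t^{(i)\top}$, whose leading eigenvector is, by Davis--Kahan and item~(4) of Assumption~\ref{ass:dist}, within a controlled angle of $\x$. We condition on this event. Applying Davis--Kahan once more to $\sum_t M_t$ (which concentrates around $N\Q + \sum_{t,i}\v_t^{(i)}\v_t^{(i)\top}$) and using item~(4), the best-in-hindsight unit vector $\w^\star$ --- the leading eigenvector of $\sum_{t,i}\x_t^{(i)}\x_t^{(i)\top}$ --- lies in a spherical cap of constant radius around $\x$; this is essential, since comparing the algorithm directly to $\x$ would incur $\Theta(NV^2)$ regret, so the comparator must be tracked as $\w^\star$, which is near but not equal to $\x$.

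The second step is an \textbf{invariance step}: show by induction on $t$ that every iterate $\hat\w_t$ stays in the cap $\mathcal{W}=\{\w:(\w^\top\x)^2\geq 1-\rho\}$, where $\rho$ is the warm-start radius appearing in the theorem. The inductive step is a worst-case ``one perturbed power-iteration step'' estimate: by the concentration step, $M_t$ is a small perturbation of a matrix whose top eigenvector is close to $\x$ and whose eigengap is at least $\delta(\Q) - \Theta\!\big(V\sqrt{2\lambda_1(\Q)+V^2}\big)>0$ by item~(4); decomposing $\hat\w_t = a\x + b\u$ with $\u\perp\x$ and tracking how $(a,b)$ transform under the normalized regularized update, one checks that the cap is forward-invariant once $\eta_t$ is small enough, which holds for $\eta_t=1/(\alpha t+T_0)$ with $T_0$ large. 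This is precisely where item~(4) of Assumption~\ref{ass:dist} and the precise numerical value of the warm-start threshold are used.

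The third step is the \textbf{regret telescoping and assembly}. On the sphere the regularizer adds only a constant, so the regret equals $\sum_t\big[\w^{\star\top}(M_t-\alpha\Id)\w^\star - \hat\w_t^\top(M_t-\alpha\Id)\hat\w_t\big]$. Write each summand exactly as $\langle(M_t-\alpha\Id)\hat\w_t,\ \w^\star-\hat\w_t\rangle + \tfrac12(\w^\star-\hat\w_t)^\top(M_t-\alpha\Id)(\w^\star-\hat\w_t)$: bound the linear term by the standard online-gradient-descent inequality, using that on $\mathcal{W}$ the gradient step has norm bounded away from $0$ and from $\infty$, so normalization onto the sphere does not increase the distance to $\w^\star\in\mathcal{W}$; and show that the quadratic term is at most $-\tfrac\mu2\|\w^\star-\hat\w_t\|_2^2 + (\text{lower order})$ with $\mu=\Theta(\alpha)$, using that $\w^\star-\hat\w_t$ is nearly tangent to $\x$ (both endpoints lie in $\mathcal{W}$), hence its component along the top eigen-directions of $M_t$ is second-order small, while along the remaining directions $M_t-\alpha\Id$ is sufficiently negative given the prescribed $\alpha=\Theta\!\big(\tfrac{\ell}{\delta(\Q)+V^2}(\delta(\Q)^2-V^4-2V^2\lambda_1(\Q))\big)$. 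With $\eta_t=1/(\alpha t+T_0)$, the strongly-convex OGD telescoping then yields $O\!\big(G^2\log(T)/\mu\big)$, where $G=O(\ell(R+V)^2)$ bounds the gradients; substituting $\mu=\Theta(\alpha)$, $T=N/\ell$, the value of $\alpha$, and the (lower-order) contributions of the within-block regret and of the concentration error, gives the stated bound.

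The main obstacle I anticipate is exactly the clash between the \emph{nonconvex} feasible set (the unit sphere) and the \emph{nonconcave} per-round payoff, which blocks any direct appeal to projection nonexpansiveness plus concavity. The resolution is local --- everything runs inside the forward-invariant cap $\mathcal{W}$ --- and it requires choosing the warm-start radius and $\alpha$ so that, simultaneously: $\mathcal{W}$ is invariant under the regularized normalized update; the normalization map is nonexpansive toward points of $\mathcal{W}$ along the actual trajectory; and the quadratic remainder term in the regret is negative enough to emulate strong concavity. Keeping these three requirements mutually consistent, and propagating the resulting constants through the two nested logarithms in the final bound, is where the bulk of the technical work lies.
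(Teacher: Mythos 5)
Your steps 1 and 2 track the paper's own route closely enough: the concentration step is Lemma \ref{lem:matHoff} plus the choice of $\ell$, and the forward invariance of the cap is Lemmas \ref{lem:rank1ProjMat}--\ref{lem:goodProj}. The genuine gap is in step 3, which is where the $\log N$ rate must come from. You expand the per-round regret of the quadratic payoff exactly as a linear term plus the remainder $\tfrac12(\w^\star-\hat\w_t)^{\top}(\X_t-\alpha\Id)(\w^\star-\hat\w_t)$ and claim this remainder is at most $-\tfrac{\mu}{2}\Vert{\w^\star-\hat\w_t}\Vert^2$ with $\mu=\Theta(\alpha)$. This fails under Assumption \ref{ass:dist}: the prescribed $\alpha$ satisfies $\alpha/\ell\leq\tfrac{\delta(\Q)^2-V^4-2V^2\lambda_1(\Q)}{4(\delta(\Q)+V^2)}\leq\delta(\Q)/4$, while $\lambda_2(\X_t)/\ell\approx\lambda_2(\Q)+O(V^2)$ may be arbitrarily close to $\lambda_1(\Q)$ and hence far larger than $\alpha/\ell$. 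Moreover $\w^\star-\hat\w_t$ lives essentially in $\x^{\perp}$ (both endpoints are in the cap), i.e., precisely in the span of the \emph{non-leading} eigendirections of $\Q$, where its mass is first-order rather than second-order; only its component along $\x$ itself is second-order small. Consequently $(\w^\star-\hat\w_t)^{\top}\X_t(\w^\star-\hat\w_t)$ is generically of order $\ell\lambda_2(\Q)\Vert{\w^\star-\hat\w_t}\Vert^2\gg\alpha\Vert{\w^\star-\hat\w_t}\Vert^2$, so the remainder is large and \emph{positive}. Your argument would only close under an additional assumption of the form $\lambda_2(\Q)+V^2=O(\alpha/\ell)$, which Assumption \ref{ass:dist} does not provide.

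The paper avoids this entirely by never Taylor-expanding in the vector variable. It lifts to the spectrahedron, where the payoff $\W\mapsto\W\bullet\X_t$ is linear (no remainder at all); non-expansiveness is the honest non-expansiveness of Euclidean projection onto the convex set $\mS$, with Lemmas \ref{lem:projCond}--\ref{lem:goodProj} guaranteeing that this projection is rank-one along the trajectory and Lemma \ref{lem:opmApprox} showing that the power-iteration update of Algorithm \ref{alg:opm} tracks it to within $\gamma_t=O\left({(\eta_t\ell(R+V)^2)^2}\right)$; and the effective strong concavity needed for the $\eta_t=1/(\alpha t+T_0)$ telescoping comes from the exact identity $2(\hat\w_t\hat\w_t^{\top}-\w\w^{\top})\bullet(-\alpha\hat\w_t\hat\w_t^{\top})=-\alpha\Vert{\hat\w_t\hat\w_t^{\top}-\w\w^{\top}}\Vert_F^2$, which holds for rank-one unit-trace matrices regardless of the spectrum of $\X_t$. (A side remark: your Davis--Kahan localization of the comparator $\w^\star$ is unnecessary in that framework, since Lemma \ref{lem:metaAlgConv} holds for an arbitrary unit comparator.) If you insist on a purely vector-space argument, you would need to reproduce this mechanism, e.g., by measuring distances as $1-(\w^{\top}\hat\w_t)^2$ rather than $\Vert{\w-\hat\w_t}\Vert^2$ and bounding $\w^{\top}\X_t\w-\hat\w_t^{\top}\X_t\hat\w_t$ directly instead of through a second-order expansion.
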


It is important to note that the ability to obtain a poly-logarithmic regret bound as Theorem \ref{thm:mainLog} suggests, relies crucially on the eigen-gap assumption in the stochastic covariance matrix in Assumption \ref{ass:dist}. In particular, for the standard fully-adversarial version of the problem (i.e., there is no stochastic component in the data vectors) it is not possible in worst case to improve over the known $O(\sqrt{N})$ bound (see \cite{Warmuth06b}). We see this fact as further motivation for studying intermediate models that bridge between the fully-adversarial and fully-stochastic settings, showing that such models can result in much improved convergence guarantees compared to the possibly over-pessimistic fully-adversarial setting.

\subsubsection{Computing a "warm-start" vector} 
We now discuss the possibility of satisfying the "warm-start" requirement in Theorem \ref{thm:main}.

First, we note that  given the possibility to sample i.i.d. points from the underlying distribution $\mD$, it is straightforward to obtain a warm-start vector $\hat{\w}_1$, as required by Theorem \ref{thm:main}, by simply initializing $\hat{\w}_1$ to be the leading eigenvector of the empirical covariance of a size-$n$ sample of such points. It is not difficult to show via standard tools such as the Davis-Kahan $\sin\theta$ theorem and a Matrix-Hoeffding concentration inequality (see for instance the proof of the following Lemma \ref{lem:warmstart}), that for any $(\epsilon,p)\in(0,1)^2$, a sample of size $n = O\left({ \frac{R^4\ln(d/p)}{\epsilon\delta(\Q)^2}}\right)$ suffices, so the outcome $\hat{\w}_1$ satisfies: $(\hat{\w}_1^{\top}\x)^2 \geq 1-\epsilon$ with probability at least $1-p$.

If sampling directly from $\mD$ is not possible, the following lemma, whose proof is given in the sequel, shows that with a simple additional assumption on the parameters $\delta(\Q),\lambda_1(\Q),V^2$, it is possible to obtain the warm-start initialization directly using data that follows Assumption \ref{ass:dist}. Moreover, the sample-size $n$ required is independent of the sequence length $N$, and hence using for instance the first $n$ vectors in the stream to compute such initialization, deteriorates the overall regret bound in Theorems \ref{thm:main}, \ref{thm:mainLog} only by a low-order term.

\begin{lemma}\label{lem:warmstart}[warm-start]
Fix some $c\in(0,1]$ and suppose that in addition to Assumption \ref{ass:dist} it also holds that $\delta(\Q) \geq \left({32c^{-1}\lambda_1(\Q)V^4}\right)^{1/3}$. Then, for any $p\in(0,1)$ there exists a sample size $n =O\left({\frac{R^4\lambda_1\log(d/p)}{c\delta(\Q)^3}}\right)$, such that initializing $\hat{\w}_1$ to be the leading eigenvector of the empirical covariance $\hat{\X} := \frac{1}{n}\sum_{i=1}^n\x_i\x_i^{\top}$, where $\x_1,\dots,\x_n$ follow Assumption \ref{ass:dist}, guarantees that with probability at least $1-p$: $(\hat{\w}_1^{\top}\x)^2 \geq 1- c\frac{\delta(\Q) - V^2}{2\lambda_1(\Q)}$.
\end{lemma}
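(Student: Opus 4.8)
\textbf{Proof proposal for Lemma~\ref{lem:warmstart}.}

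The plan is to show that the empirical covariance $\hat{\X}$ is, with high probability, a small spectral perturbation of the true covariance $\Q$, and then transfer this to closeness of the leading eigenvectors via the Davis--Kahan $\sin\theta$ theorem. The extra hypothesis $\delta(\Q)\geq(32c^{-1}\lambda_1(\Q)V^4)^{1/3}$ will be exactly what is needed to absorb the irreducible bias coming from the adversarial part of the data.

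First I would record three elementary consequences of Assumption~\ref{ass:dist} used throughout: (i) $\lambda_1(\Q)=\max_{\|\u\|_2=1}\E_{\q\sim\mD}[(\u^\top\q)^2]\leq\E_{\q\sim\mD}[\|\q\|_2^2]\leq R^2$; (ii) since $\lambda_2(\Q)\geq 0$ we have $\lambda_1(\Q)\geq\delta(\Q)$, and then item~(4) gives $\delta(\Q)^2\geq V^2(2\lambda_1(\Q)+V^2)\geq 2V^2\lambda_1(\Q)\geq 2V^2\delta(\Q)$, so $V^2\leq\delta(\Q)/2$, hence in particular $\delta(\Q)-V^2\geq\delta(\Q)/2$ and $V<R$; (iii) the extra hypothesis rewrites as $V^4\leq c\,\delta(\Q)^3/(32\lambda_1(\Q))$, which together with $\lambda_1(\Q)\geq\delta(\Q)$ also forces $V^2\leq\sqrt{c}\,\delta(\Q)/(4\sqrt2)$, so $V^2$ is in fact a small fraction of $\delta(\Q)$.

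Next I would decompose: writing $\x_i=\q_i+\v_i$ with the $\v_i$ fixed (chosen independently of the $\q_i$; if one wants $\v_i$ to depend on $\q_1,\dots,\q_{i-1}$, the same estimates follow from a matrix Azuma/Freedman inequality) and using $\E_{\q\sim\mD}[\q]=\mathbf 0$, one has $\hat{\X}=\Q+S+\Delta$, where $\Delta:=\frac1n\sum_{i=1}^n\v_i\v_i^\top$ and $S:=\frac1n\sum_{i=1}^n\big((\q_i\q_i^\top-\Q)+(\q_i\v_i^\top+\v_i\q_i^\top)\big)$. Here $\mathbf 0\preceq\Delta$ and $\|\Delta\|\leq V^2$, while $S$ is an average of $n$ independent zero-mean symmetric matrices, each of spectral norm at most $R^2+2RV\leq 3R^2$. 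A Matrix--Hoeffding inequality then yields, with probability at least $1-p$, $\|S\|=O\big(R^2\sqrt{\log(d/p)/n}\big)$; choosing $n=\Theta\big(R^4\lambda_1(\Q)\log(d/p)/(c\,\delta(\Q)^3)\big)$ with a large enough absolute constant makes $\|S\|^2\leq c\,\delta(\Q)^3/(C\lambda_1(\Q))$ for any prescribed constant $C$, and in particular $\|S\|\ll\delta(\Q)$.

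Finally I would apply the Davis--Kahan $\sin\theta$ theorem to $\Q$ and $\hat{\X}=\Q+E$ with $E:=S+\Delta$, noting that $\|E\|\leq V^2+\|S\|$ is a small fraction of $\delta(\Q)$ by (ii), (iii) and the choice of $n$. This gives $1-(\hat{\w}_1^\top\x)^2=\sin^2\theta(\hat{\w}_1,\x)\leq\|E\|^2/(\delta(\Q)-\|E\|)^2=O\big(\|E\|^2/\delta(\Q)^2\big)$. Splitting $\|E\|^2$ via Young's inequality into an essentially pure bias part $O(V^4/\delta(\Q)^2)$ plus a variance part $O(\|S\|^2/\delta(\Q)^2)$, the bias part is $O(1)\cdot V^4/\delta(\Q)^2\leq O(1)\cdot c\,\delta(\Q)/(32\lambda_1(\Q))$ by (iii), which (using $\delta(\Q)-V^2\geq\delta(\Q)/2$) is at most half of the target $c(\delta(\Q)-V^2)/(2\lambda_1(\Q))$ --- this is precisely where the exponent $\tfrac13$ and the constant $32$ enter --- while the variance part is made at most the remaining half by the choice of $n$. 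Adding the two halves gives $1-(\hat{\w}_1^\top\x)^2\leq c(\delta(\Q)-V^2)/(2\lambda_1(\Q))$ with probability at least $1-p$, as claimed.

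I expect the only genuinely delicate point to be the bookkeeping of numerical constants through Matrix--Hoeffding and Davis--Kahan: one must ensure that the bias $\|\Delta\|\leq V^2$ --- which, unlike $\|S\|$, cannot be driven down by increasing $n$ --- is dominated by the requested accuracy under \emph{precisely} the stated hypothesis $\delta(\Q)\geq(32c^{-1}\lambda_1(\Q)V^4)^{1/3}$ and not a stronger one, and to check that the regime $\|E\|\ll\delta(\Q)$ needed for the $\sin\theta$ bound to be meaningful indeed holds, which again is guaranteed by consequences (ii) and (iii).
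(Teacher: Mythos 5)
Your proposal is correct and follows essentially the same route as the paper: decompose $\hat{\X}-\Q$ into the deterministic bias $\frac{1}{n}\sum_{i}\v_i\v_i^{\top}$ (of spectral norm at most $V^2$) plus a zero-mean fluctuation controlled by Matrix Hoeffding with $n=O(R^4\lambda_1\log(d/p)/(c\delta(\Q)^3))$, apply the Davis--Kahan $\sin\theta$ theorem, and use the hypothesis $\delta(\Q)^3\geq 32c^{-1}\lambda_1(\Q)V^4$ to absorb the irreducible bias term. The only (immaterial) difference is in the final bookkeeping: where you split the error budget between bias and fluctuation (and your "at most half" claim needs the sharper bound $V^2\leq \sqrt{c}\,\delta(\Q)/(4\sqrt{2})$ from your item (iii) rather than just $\delta(\Q)-V^2\geq\delta(\Q)/2$ to close), the paper instead solves the resulting quadratic inequality in $V^2$ exactly, which is how it verifies that precisely the stated cube-root condition suffices.
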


\section{Analysis}

At a high-level, the proof of Theorems \ref{thm:main}, \ref{thm:mainLog} relies on the combination of the following three ideas:
\begin{enumerate}
\item
We build on the fact that the Online PCA problem, when cast as online \textit{linear} optimization over the spectraedron (i.e., when the decision variable is lifted from a unit vector to a positive semidefinite matrix of unit trace), is  online learnable via a standard application of online gradient ascent, which achieves an $O(\sqrt{N})$ regret bound in the non-regularized case and $O(\log{N})$ regret bound with additional regularization (note however that as discussed above, this approach requires a full SVD computation on each iteration to compute the projection onto the spectrahedron).
\item
We prove, that under Assumption \ref{ass:dist}, the above ``inefficient" algorithm, when initialized with a proper ``warm-start" vector,  guarantees that the projection onto the spectrahedron is always a rank-one matrix (hence,  only a rank-one SVD computation per iteration is required).
\item
Finally, we show that the nonconvex online gradient ascent algorithm, Algorithm \ref{alg:opm}, approximates sufficiently well the steps of the above algorithm (in case the projection is rank-one), avoiding SVD computations all together.
\end{enumerate}

We introduce the following notation that will be used throughout the analysis. For vectors in $\reals^d$ we let $\Vert{\cdot}\Vert$ denote the standard Euclidean norm, and for matrices in $\reals^{m\times n}$ we let $\Vert{\cdot}\Vert_F$ denote the Frobenius (Euclidean) norm and we let $\Vert{\cdot}\Vert_2$ denote the spectral norm (largest singular value). 
For all $t\in[T]$, we define $\X_t := \sum_{i=1}^{\ell}\x_t^{(i)}\x_t^{(i)\top}$, $\Q_t := \sum_{i=1}^{\ell}\q_t^{(i)}\q_t^{(i)\top}$, $\V_t := \sum_{i=1}^{\ell}\v_t^{(i)}\v_t^{(i)\top}$, and $\M_t := \sum_{i=1}^{\ell}\q_t^{(i)}\v_t^{(i)\top} + \v_t^{(i)}\q_t^{(i)\top}$. Note that $\X_t = \Q_t + \M_t + \V_t$. Recall that we let $\Q$ denote the covariance matrix associated with the distribution $\mD$ (as detailed in Assumption \ref{ass:dist}), and we let  $\lambda_1(\Q),\dots,\lambda_d(\Q)$ denote its eigenvalues in descending order. Also, we let $\x$ denote the leading eigenvector of $\Q$, which under Assumption \ref{ass:dist}, is unique.
We also define $\D_t := \Q_t-\ell\cdot\Q+\M_t$. Note that $\X_t = \ell\cdot\Q + \V_t + \D_t$. Intuitively, under Assumption \ref{ass:dist}, $\frac{1}{\ell}\D_t$ converges to zero in probability as $\ell\rightarrow\infty$.

We denote by $\mS$ the spectrahedron, i.e., $\mS := \{\W\in\reals^{d\times d} \,|\, \W\succeq 0, \trace(\W)=1\}$, and we let $\Pi_{\mS}[\W]$ denote the Euclidean projection of a symmetric matrix $\W\in\reals^{d\times d}$ onto $\mS$.

Our main building block towards proving Theorem \ref{thm:main} is to analyze the regret of a different non-convex algorithm for Online PCA. The meta-algorithm, Algorithm \ref{alg:1}, builds on the standard convexification scheme for Online PCA, i.e., ``lifting" the decision set from the unit ball to the spectrahedron, however, instead of computing exact projections onto the spectrahedron, it follows a nonconvex approach of only approximating the projection via a rank-one solution. We refer to it as a meta-algorithm, since for a given approximation parameter $\gamma$, it only requires on each iteration to find an approximate leading eigenvector of the matrix to be projected onto $\mS$. As in Algorithm \ref{alg:opm}, this algorithm also has two variants, one which does not include an additional regularizing term, and one which does.

Note that a straightforward implementation of Algorithm \ref{alg:1} with $\gamma = 0$ corresponds to updating $\hat{\w}_{t+1}$ via accurate SVD of the $d\times(\ell+1)$ matrix $(\sqrt{1-\eta_t\alpha}\hat{\w}_t, \sqrt{\eta_t}\x_t^{(1)},\dots,\sqrt{\eta_t}\x_t^{(\ell)})$, which already yields an algorithm with $O(\ell{}d)$ memory and $O(\ell{}d)$ amortized runtime per data-point.

Since we can always choose the regularization parameter $\alpha$ to be zero, in the sequel our analysis focuses only on the regularized variant of Algorithm \ref{alg:1}.

\begin{algorithm}
\caption{Approximate Non-convex Rank-one Online Gradient Ascent}
\label{alg:1}
\begin{algorithmic}[1]
\STATE input: unit vector $\hat{\w}_1$, sequence of positive learning rates $\{\eta_t\}_{t\geq 1}$, sequence of positive approximation parameters $\{\gamma_t\}_{t\geq 1}$, regularization parameter $\alpha \geq 0$
\FOR{$t=1\dots T$}
\STATE predict vector $\hat{\w}_t$
\STATE observe $\ell$ vectors $\x_t^{(1)},\dots,\x_t^{(\ell)}  $
%\STATE $\w_{t+1} \gets $ leading eigenvector of $\W_{t+1}:=\hat{\w}_t\hat{\w}_t^{\top}+\eta\sum_{i=1}^{\ell}\x_t^{(i)}\x_t^{(i)\top}$
%\STATE $\hat{\w}_{t+1} \gets $ some unit vector satisfying: $\Vert{\hat{\w}_{t+1}\hat{\w}_{t+1}^{\top}-\w_{t+1}\w_{t+1}^{\top}}\Vert_F \leq \gamma$
\STATE $\hat{\w}_{t+1} \gets $ some unit vector satisfying: $\Vert{\hat{\w}_{t+1}\hat{\w}_{t+1}^{\top}-\w_{t+1}\w_{t+1}^{\top}}\Vert_F \leq \gamma_t$, where $\w_{t+1}$ is the leading eigenvector of either
\begin{eqnarray*}
& \W_{t+1}:=\hat{\w}_t\hat{\w}_t^{\top}+\eta_t\sum_{i=1}^{\ell}\x_t^{(i)}\x_t^{(i)\top} & \quad \{\textrm{without regularization}\} \\
& \textrm{OR} & \\
& \W_{t+1}:=(1-\eta_t\alpha)\hat{\w}_t\hat{\w}_t^{\top} + \eta_t\sum_{i=1}^{\ell}\x_t^{(i)}\x_t^{(i)\top} & \quad \{\textrm{with regularization}\}
\end{eqnarray*}

\ENDFOR
\end{algorithmic}
\end{algorithm}

%The following lemma gives a sufficient condition under-which, the projection of a rank-one matrix, perturbed by a potentially high-rank matrix onto the spectrahedron, is also rank-one.
\begin{lemma}\label{lem:projCond} 
Let $\w\in\reals^d$ be a unit vector and let $\X\in\reals^{d\times d}$ be positive semidefinite. Let $\w'$ be the leading eigenvector of the matrix $\W := (1-\eta\alpha)\w\w^{\top} + \eta\X$, for some $\eta > 0, \alpha\geq 0$ such that $\eta\alpha  < 1$.
If $\w^{\top}\X\w \geq \frac{\lambda_1(\X)+\lambda_2(\X)+\alpha}{2}$, then it follows that $\w'\w'^{\top} = \Pi_{\mS}[\W]$.
\end{lemma}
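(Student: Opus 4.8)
The plan is to use the known closed-form characterization of the Euclidean projection onto the spectrahedron $\mS$ in terms of the eigendecomposition of the matrix being projected. Recall that if $\W$ is symmetric with eigenvalues $\mu_1 \ge \mu_2 \ge \dots \ge \mu_d$ and corresponding orthonormal eigenvectors $\u_1,\dots,\u_d$, then $\Pi_{\mS}[\W] = \sum_{j=1}^d \max\{0,\mu_j - \theta\}\u_j\u_j^{\top}$, where $\theta$ is the unique scalar making the coefficients sum to $1$ (this is the standard projection onto the simplex applied to the spectrum, since the Frobenius projection decouples across the eigenbasis). Consequently, $\Pi_{\mS}[\W]$ is the rank-one matrix $\w'\w'^{\top}$ (with $\w'$ the leading eigenvector of $\W$) \emph{if and only if} the second-largest coefficient is zero, i.e. iff $\mu_1 - \mu_2 \ge \mu_1 - 1$, equivalently $\mu_2 \le \mu_1 - 1$. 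Wait — more precisely, the threshold is $\theta = \mu_1 - 1$ exactly when $\mu_2 \le \mu_1 - 1$, and in that case all the mass goes on $\u_1$. So the crux is to show that the spectral gap of $\W = (1-\eta\alpha)\w\w^{\top} + \eta\X$ satisfies $\lambda_1(\W) - \lambda_2(\W) \ge 1$.

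First I would lower-bound $\lambda_1(\W)$. Since $\w$ is a unit vector, using it as a test vector in the Rayleigh quotient gives $\lambda_1(\W) \ge \w^{\top}\W\w = (1-\eta\alpha) + \eta\,\w^{\top}\X\w$. Next I would upper-bound $\lambda_2(\W)$. Write $\W = \eta\X + (1-\eta\alpha)\w\w^{\top}$; the term $(1-\eta\alpha)\w\w^{\top}$ is a rank-one positive semidefinite perturbation of $\eta\X$, so by the eigenvalue interlacing inequality for rank-one updates (Weyl/Cauchy interlacing: $\lambda_2(\B + \C) \le \lambda_2(\B) + \lambda_1(\C)$ is too weak; the right tool is $\lambda_{k+1}(\B + \C) \le \lambda_k(\B)$ when $\C \succeq 0$ has rank one, i.e. $\lambda_2(\W) \le \lambda_1(\eta\X) = \eta\lambda_1(\X)$). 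Hmm, that bound is not tight enough in general; instead I would use $\lambda_2(\W) \le \lambda_2(\eta\X) + (1-\eta\alpha) \cdot \mathbf{1}$ is also wrong. The clean statement is: for $\C$ symmetric PSD of rank $1$, $\lambda_{i}(\B+\C) \le \lambda_{i-1}(\B)$ for $i \ge 2$; applying with $i=2$ gives $\lambda_2(\W) \le \lambda_1(\eta\X) = \eta\lambda_1(\X)$. Combining, the gap is at least $(1-\eta\alpha) + \eta\,\w^{\top}\X\w - \eta\lambda_1(\X)$, which under the hypothesis $\w^{\top}\X\w \ge \tfrac{\lambda_1(\X)+\lambda_2(\X)+\alpha}{2}$ is at least $(1-\eta\alpha) + \eta\big(\tfrac{\lambda_2(\X)+\alpha - \lambda_1(\X)}{2}\big)$, which can be negative — so this particular interlacing bound is the wrong one.

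The better route, and the one I expect to actually work, is to bound $\lambda_2(\W)$ directly via the Courant–Fischer min-max: $\lambda_2(\W) = \min_{\z \ne 0}\max_{\y \perp \z, \y\ne 0}\frac{\y^{\top}\W\y}{\|\y\|^2}$, and choose the test direction $\z = \w$. Then $\lambda_2(\W) \le \max_{\y \perp \w}\frac{\y^{\top}\W\y}{\|\y\|^2} = \max_{\y \perp \w}\frac{\eta\,\y^{\top}\X\y}{\|\y\|^2} \le \eta\lambda_1(\X)$, because the rank-one term vanishes on the orthogonal complement of $\w$. Hmm, that gives the same bound. So the real content must be that the hypothesis forces $\w$ to be \emph{nearly} a leading eigenvector of $\X$, so that $\eta\lambda_1(\X)$ is close to $\eta\,\w^{\top}\X\w$; but to get a clean unconditional "$\ge 1$" one should instead compare against $\lambda_2(\X)$. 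Here is the fix I would pursue: restrict the max in the Courant–Fischer bound further. Take the two-dimensional test subspace in the \emph{lower} min-max form — $\lambda_2(\W) = \min_{\dim S = d-1}\max_{\y \in S}\frac{\y^\top \W \y}{\|\y\|^2}$ — and pick $S = \{\y : \y^\top \u_1 = 0\}$ where $\u_1$ is the top eigenvector of $\X$. On $S$, $\y^\top\X\y \le \lambda_2(\X)\|\y\|^2$, and $\y^\top\W\y = \eta\,\y^\top\X\y + (1-\eta\alpha)(\w^\top\y)^2 \le \eta\lambda_2(\X)\|\y\|^2 + (1-\eta\alpha)\|\y\|^2$ — still carries the $(1-\eta\alpha)$ term since $\w$ need not be orthogonal to $\u_1$. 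The resolution: use the Rayleigh lower bound $\lambda_1(\W) \ge (1-\eta\alpha) + \eta\w^\top\X\w$ as above, and for $\lambda_2$, write $\W - \eta\X = (1-\eta\alpha)\w\w^\top \succeq 0$ so $\lambda_2(\W) \ge \lambda_2(\eta\X)$ — wrong direction again. I will instead bound $\lambda_1(\W) + \lambda_2(\W) = \trace(\W) - \sum_{j\ge 3}\lambda_j(\W) \le \trace(\W) - \sum_{j\ge 3}\lambda_j(\eta\X)$ using $\W \succeq \eta\X$, where $\trace(\W) = (1-\eta\alpha) + \eta\,\trace(\X)$; combined with the lower bound on $\lambda_1(\W)$ this yields $\lambda_2(\W) \le (1-\eta\alpha) + \eta\,\trace(\X) - \sum_{j\ge3}\eta\lambda_j(\X) - \lambda_1(\W) \le \eta(\lambda_1(\X)+\lambda_2(\X)) - \eta\,\w^\top\X\w$, and under the hypothesis this is $\le \eta\cdot\tfrac{\lambda_1(\X)+\lambda_2(\X)-\alpha}{2}$. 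Then $\lambda_1(\W)-\lambda_2(\W) \ge (1-\eta\alpha) + \eta\w^\top\X\w - \eta\tfrac{\lambda_1(\X)+\lambda_2(\X)-\alpha}{2} \ge (1-\eta\alpha) + \eta\tfrac{\alpha}{2} + \eta\tfrac{\alpha}{2} = 1$, using $\w^\top\X\w \ge \tfrac{\lambda_1(\X)+\lambda_2(\X)+\alpha}{2}$. So $\lambda_1(\W) - \lambda_2(\W) \ge 1$, hence the simplex-projection threshold puts all mass on $\u_1 = \w'$, giving $\Pi_{\mS}[\W] = \w'\w'^\top$.

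The main obstacle, as the above meandering shows, is pinning down the \emph{correct} pair of eigenvalue inequalities — a naive rank-one interlacing bound on $\lambda_2(\W)$ is too weak, and the working argument needs the trace identity $\lambda_1 + \lambda_2 = \trace(\W) - \sum_{j\ge 3}\lambda_j$ together with the monotonicity $\W \succeq \eta\X$ to convert the hypothesis on $\w^\top\X\w$ into a gap bound. Once the bound $\lambda_1(\W) - \lambda_2(\W) \ge 1$ is in hand, invoking the closed-form spectrahedron projection to conclude $\Pi_{\mS}[\W] = \w'\w'^\top$ is routine; one should also note $\w'$ is a genuine (unit-norm) leading eigenvector, unique up to sign since the gap is strictly positive, which makes $\w'\w'^\top$ well-defined.
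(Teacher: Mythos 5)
Your final argument is correct and essentially the same as the paper's: both reduce the claim to the spectral-gap condition $\lambda_1(\W)-\lambda_2(\W)\geq 1$ (which forces the simplex-type projection onto $\mS$ to put all mass on the top eigenvector), lower-bound $\lambda_1(\W)$ by the Rayleigh quotient at $\w$, and upper-bound $\lambda_1(\W)+\lambda_2(\W)$ by $1-\eta\alpha+\eta\left({\lambda_1(\X)+\lambda_2(\X)}\right)$. The only, immaterial, difference is that the paper gets this last bound directly from Ky Fan's eigenvalue inequality, whereas you derive the identical bound from the trace identity $\lambda_1(\W)+\lambda_2(\W)=\trace(\W)-\sum_{j\geq 3}\lambda_j(\W)$ together with the monotonicity $\W\succeq\eta\X$; the false starts you record along the way (rank-one interlacing, Courant--Fischer) do not affect the validity of the argument you ultimately settle on.
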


\begin{proof}
Recall $\w'$ denotes the leading eigenvector of $\W$ and let $\y_2,\dots,\y_d$ denote the other eigenvectors in non-increasing order (according to the corresponding eigenvalues). It is well known that the projection of $\W$ onto $\mS$ is given by
\begin{eqnarray*}
\Pi_{\mS}[\W] = (\lambda_1(\W)-\lambda)\w'\w'^{\top} + \sum_{i=2}^d\max\{0, \lambda_i(\W)-\lambda\}\y_i\y_i^{\top},
\end{eqnarray*}
where $\lambda$ is a non-negative real scalar such that $\lambda_1(\W)-\lambda + \sum_{i=2}^d\max\{\lambda_i(\W)-\lambda,0\} = 1$. Thus, if we show that $\lambda_1(\W) \geq 1 + \lambda_2(\W)$, then it in particular follows that $\Pi_{\mS}[\W] = \w'\w'^{\top}$.

Note that on one hand, 
\begin{eqnarray}\label{eq:projCond:1}
\lambda_1(\W) \geq \w^{\top}\W\w = 1-\eta\alpha + \eta\w^{\top}\X\w.
\end{eqnarray}

On the other-hand, using the last inequality, we can also write 
\begin{eqnarray}\label{eq:projCond:2}
\lambda_2(\W) &\leq &\lambda_1(\W) + \lambda_2(\W) - \w^{\top}\W\w \nonumber \\ 
&=& \lambda_1(\W)+\lambda_2(\W) -1 +\eta\alpha - \eta\w^{\top}\X\w.
\end{eqnarray}

Using Ky Fan's eigenvalue inequality, we have that
\begin{eqnarray}\label{eq:projCond:3}
 \lambda_1(\W)+\lambda_2(\W) &=& \lambda_1((1-\eta\alpha)\w\w^{\top}+\eta\X) + \lambda_2((1-\eta\alpha)\w\w^{\top}+\eta\X) \nonumber \\
 &\leq &\lambda_1((1-\eta\alpha)\w\w^{\top})+\lambda_2((1-\eta\alpha)\w\w^{\top}) + \lambda_1(\eta\X) + \lambda_2(\eta\X) \nonumber \\
 &=& 1-\eta\alpha + \eta\left({\lambda_1(\X)+\lambda_2(\X)}\right).
\end{eqnarray}
Thus, by combining Eq. \eqref{eq:projCond:1}, \eqref{eq:projCond:2}, \eqref{eq:projCond:3}, we arrive at the following sufficient condition so that $\w'\w'^{\top} = \Pi_{\mS}[\W]$:
\begin{eqnarray*}
1 - \eta\alpha + \eta\w^{\top}\X\w \geq 1 + \left({1 - \eta\alpha + \eta\left({\lambda_1(\X)+\lambda_2(\X)}\right)}\right) - \left({1 - \eta\alpha+ \eta\w^{\top}\X\w}\right),
\end{eqnarray*}
which is equivalent to the condition
%\begin{eqnarray*}
$\w^{\top}\X\w \geq \frac{\lambda_1(\X)+\lambda_2(\X)}{2} + \frac{\alpha}{2}$.
%\end{eqnarray*}
\end{proof}

\begin{lemma}\label{lem:rank1ProjMat}
Suppose that on some iteration $t$ of Algorithm \ref{alg:1}  it holds that $\eta_t\alpha <1$ and
\begin{eqnarray*}
(\hat{\w}_t^{\top}\x)^2 \geq 1 - \frac{\delta(\Q)-V^2-\ell^{-1}\alpha}{2\lambda_1(\Q)} + \frac{2\Vert{\D_t}\Vert}{\ell\cdot\lambda_1(\Q)}
\end{eqnarray*}
Then, $\w_{t+1}\w_{t+1}^{\top} = \Pi_{\mS}[\W_{t+1}]$.
\end{lemma}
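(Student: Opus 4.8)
The goal is to verify the hypothesis of Lemma~\ref{lem:projCond} for the matrix $\W_{t+1}$, namely the spectral condition $\hat{\w}_t^{\top}\X_t\hat{\w}_t \geq \frac{\lambda_1(\X_t)+\lambda_2(\X_t)+\alpha}{2}$, using only the assumed lower bound on $(\hat{\w}_t^{\top}\x)^2$ together with the decomposition $\X_t = \ell\cdot\Q + \V_t + \D_t$ from Assumption~\ref{ass:dist}. Once that inequality is established, Lemma~\ref{lem:projCond} immediately gives $\w_{t+1}\w_{t+1}^{\top} = \Pi_{\mS}[\W_{t+1}]$, so the entire proof reduces to a deterministic eigenvalue estimate.

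The plan is to lower bound the left-hand side and upper bound the right-hand side separately. For the left-hand side, I would write $\hat{\w}_t^{\top}\X_t\hat{\w}_t = \ell\,\hat{\w}_t^{\top}\Q\hat{\w}_t + \hat{\w}_t^{\top}\V_t\hat{\w}_t + \hat{\w}_t^{\top}\D_t\hat{\w}_t$. The first term is bounded below by using the eigendecomposition of $\Q$: if $(\hat{\w}_t^{\top}\x)^2 = 1-\beta$ then $\hat{\w}_t^{\top}\Q\hat{\w}_t \geq (1-\beta)\lambda_1(\Q) + \beta\lambda_2(\Q) = \lambda_1(\Q) - \beta\,\delta(\Q)$. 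The middle term $\hat{\w}_t^{\top}\V_t\hat{\w}_t \geq 0$ since $\V_t \succeq 0$. The last term is bounded as $\hat{\w}_t^{\top}\D_t\hat{\w}_t \geq -\Vert{\D_t}\Vert$. For the right-hand side, I would bound $\lambda_1(\X_t)+\lambda_2(\X_t)$ from above: using $\X_t = \ell\Q + (\V_t+\D_t)$ and a standard eigenvalue perturbation bound (Weyl, or splitting as in Eq.~\eqref{eq:projCond:3}), $\lambda_1(\X_t)+\lambda_2(\X_t) \leq \ell(\lambda_1(\Q)+\lambda_2(\Q)) + \lambda_1(\V_t+\D_t)+\lambda_2(\V_t+\D_t) \leq \ell(\lambda_1(\Q)+\lambda_2(\Q)) + 2\Vert{\V_t}\Vert + 2\Vert{\D_t}\Vert$, and then $\Vert{\V_t}\Vert \leq \ell V^2$ because $\V_t = \sum_i \v_t^{(i)}\v_t^{(i)\top}$ with each $\Vert{\v_t^{(i)}}\Vert \leq V$ — actually more carefully $\lambda_1(\V_t) \le \sum_i \Vert \v_t^{(i)}\Vert^2 \le \ell V^2$, but one must be a little careful whether both $\lambda_1$ and $\lambda_2$ of $\V_t+\D_t$ can be bounded; splitting $\V_t$ and $\D_t$ separately via Ky Fan as in the proof of Lemma~\ref{lem:projCond} handles this cleanly, giving $\lambda_1(\V_t)+\lambda_2(\V_t) \le \trace(\V_t) \le \ell V^2$.

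Assembling: the desired inequality $\hat{\w}_t^{\top}\X_t\hat{\w}_t \geq \frac{\lambda_1(\X_t)+\lambda_2(\X_t)}{2} + \frac{\alpha}{2}$ becomes, after dividing through by $\ell$ and collecting the bounds,
\begin{eqnarray*}
\lambda_1(\Q) - \beta\,\delta(\Q) - \tfrac{\Vert{\D_t}\Vert}{\ell} \;\geq\; \tfrac{1}{2}\!\left(\lambda_1(\Q)+\lambda_2(\Q) + V^2 + \tfrac{2\Vert{\D_t}\Vert}{\ell}\right) + \tfrac{\alpha}{2\ell},
\end{eqnarray*}
which rearranges to $\beta \leq \frac{\delta(\Q) - V^2 - \ell^{-1}\alpha}{2\lambda_1(\Q)} - \frac{2\Vert{\D_t}\Vert}{\ell\,\lambda_1(\Q)}$, i.e. precisely $(\hat{\w}_t^{\top}\x)^2 = 1-\beta \geq 1 - \frac{\delta(\Q)-V^2-\ell^{-1}\alpha}{2\lambda_1(\Q)} + \frac{2\Vert{\D_t}\Vert}{\ell\lambda_1(\Q)}$, which is exactly the hypothesis. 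So the computation closes up if the constants are tracked faithfully.

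I do not expect a serious obstacle here — the statement is essentially a bookkeeping exercise translating the Lemma~\ref{lem:projCond} condition into a correlation condition. The one subtle point to get right is the eigenvalue bound on the cross/noise terms: whether to bound $\lambda_1(\V_t+\D_t)+\lambda_2(\V_t+\D_t)$ jointly or to split into $\V_t$ and $\D_t$ contributions. Splitting is safer because $\lambda_1(\V_t)+\lambda_2(\V_t)\le\trace(\V_t)\le \ell V^2$ is tight-ish and uses positive semidefiniteness of $\V_t$, whereas $\D_t$ is indefinite and only $\Vert{\D_t}\Vert$ (hence a factor $2$ for two eigenvalues) is available. One also needs to confirm that the $\eta_t\alpha<1$ hypothesis is exactly what Lemma~\ref{lem:projCond} requires (it is, with $\X = \X_t$ there), and that applying Lemma~\ref{lem:projCond} with $\alpha$ replaced by the per-block quantity is consistent — here the $\alpha$ in Algorithm~\ref{alg:1}'s update multiplies $\hat\w_t\hat\w_t^\top$ directly, matching Lemma~\ref{lem:projCond} verbatim, so the condition is $\hat\w_t^\top \X_t \hat\w_t \ge (\lambda_1(\X_t)+\lambda_2(\X_t)+\alpha)/2$ with no extra $\ell$ factor on $\alpha$, and the $\ell^{-1}\alpha$ in the statement arises purely from dividing the whole inequality by $\ell$.
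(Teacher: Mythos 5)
Your overall route is exactly the paper's: reduce to the spectral condition of Lemma \ref{lem:projCond}, lower-bound $\hat{\w}_t^{\top}\X_t\hat{\w}_t$ via the decomposition $\X_t=\ell\Q+\V_t+\D_t$, and upper-bound $\lambda_1(\X_t)+\lambda_2(\X_t)$ by splitting $\V_t$ (Ky Fan / trace, giving $\ell V^2$) from $\D_t$ (giving $2\Vert{\D_t}\Vert$). Your remarks about the $\ell^{-1}\alpha$ bookkeeping and about splitting $\V_t$ and $\D_t$ are also on point.

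There is, however, one step that is wrong as written: the claim that $(\hat{\w}_t^{\top}\x)^2=1-\beta$ implies $\hat{\w}_t^{\top}\Q\hat{\w}_t\geq(1-\beta)\lambda_1(\Q)+\beta\lambda_2(\Q)=\lambda_1(\Q)-\beta\,\delta(\Q)$. The mass $\beta$ orthogonal to $\x$ need not lie in the second eigendirection; it can sit entirely on the smallest eigenvector, so the guaranteed lower bound is only $(1-\beta)\lambda_1(\Q)+\beta\lambda_d(\Q)\geq(1-\beta)\lambda_1(\Q)$ (using $\Q\succeq 0$); what you wrote, $(1-\beta)\lambda_1(\Q)+\beta\lambda_2(\Q)$, is the \emph{upper} bound on that quadratic form. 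Relatedly, your displayed ``assembled'' inequality, which has $-\beta\,\delta(\Q)$ on the left, does not rearrange to $\beta\leq\frac{\delta(\Q)-V^2-\ell^{-1}\alpha}{2\lambda_1(\Q)}-\frac{2\Vert{\D_t}\Vert}{\ell\lambda_1(\Q)}$; it rearranges to the same expression with $\delta(\Q)$ in place of $\lambda_1(\Q)$ in the denominators. The fix is simply to use the correct bound $\hat{\w}_t^{\top}\Q\hat{\w}_t\geq(\hat{\w}_t^{\top}\x)^2\lambda_1(\Q)=(1-\beta)\lambda_1(\Q)$ (this is what the paper does); then the coefficient of $\beta$ on the left becomes $\lambda_1(\Q)$, and the rearrangement yields precisely the hypothesis of the lemma with $2\lambda_1(\Q)$ in the denominator. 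So the proof closes, but only after replacing that one inequality; as stated, the intermediate step is false and the algebra following it is inconsistent with it.
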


\begin{proof}
Recall that $\X_t := \sum_{i=1}^{\ell}\x_t^{(i)}\x_t^{(i)\top}$ and that $\W_{t+1} = (1-\eta_t\alpha)\hat{\w}_t\hat{\w}_t^{\top} + \eta_t\X_t$. 

Using Lemma \ref{lem:projCond} it suffices to show that
\begin{eqnarray}\label{eq:lem:rank1proj:0}
\hat{\w}_t^{\top}\X_t\hat{\w}_t \geq \frac{\lambda_1(\X_t)+\lambda_2(\X_t) + \alpha}{2}.
\end{eqnarray}

On one hand we have
\begin{eqnarray}\label{eq:lem:rank1proj:1}
\hat{\w}_t^{\top}\X_t\hat{\w}_t &=& \hat{\w}_t^{\top}(\ell\cdot\Q + \V_t + \D_t)\hat{\w}_t \geq \ell\cdot\hat{\w}_t^{\top}\Q\hat{\w}_t - \Vert{\D_t}\Vert \nonumber \\
&\geq &\ell(\hat{\w}_t^{\top}\x)^2\cdot\x^{\top}\Q\x - \Vert{\D_t}\Vert = \ell(\hat{\w}_t^{\top}\x)^2\cdot\lambda_1(\Q) - \Vert{\D_t}\Vert.
\end{eqnarray}

On the other hand,
\begin{eqnarray}\label{eq:lem:rank1proj:2}
\lambda_1(\X_t)+\lambda_2(\X_t) &=& \lambda_1(\ell\cdot\Q + \V_t + \D_t) + \lambda_2(\ell\cdot\Q + \V_t + \D_t) \nonumber \\
&\underset{(a)}{\leq} & \lambda_1(\ell\cdot\Q+\V_t) + \lambda_2(\ell\cdot\Q+\V_t) + 2\Vert{\D_t}\Vert \nonumber \\
&\underset{(b)}{\leq} & \lambda_1(\ell\cdot\Q)  + \lambda_2(\ell\cdot\Q) + \lambda_1(\V_t) + \lambda_2(\V_t) + 2\Vert{\D_t}\Vert \nonumber \\
&\leq & \ell(\lambda_1(\Q)  + \lambda_2(\Q)) + \trace(\V_t) + 2\Vert{\D_t}\Vert \nonumber \\
&\leq & \ell(\lambda_1(\Q)  + \lambda_2(\Q) + V^2) + 2\Vert{\D_t}\Vert,
\end{eqnarray}
where (a) follows from Weyl's eigenvalue inequality, and (b) follows from Ky Fan's eigenvalue inequality.

Combining Eq. \eqref{eq:lem:rank1proj:0}, \eqref{eq:lem:rank1proj:1}, \eqref{eq:lem:rank1proj:2}, we arrive at the following sufficient condition so that $\w_{t+1}\w_{t+1}^{\top} = \Pi_{\mS}[\W_{t+1}]$:
\begin{eqnarray*}
(\hat{\w}_t^{\top}\x)^2 &\geq& \frac{\lambda_1(\Q) + \lambda_2(\Q) + V^2 + 4\ell^{-1}\Vert{\D_t}\Vert + \ell^{-1}\alpha}{2\cdot\lambda_1(\Q)}   \\
&=& \frac{2\lambda_1(\Q) - \delta(\Q) + V^2 + 4\ell^{-1}\Vert{\D_t}\Vert + \ell^{-1}\alpha}{2\lambda_1(\Q)} \\
&=& 1 - \frac{\delta(\Q)-V^2-\ell^{-1}\alpha}{2\lambda_1(\Q)}  +\frac{2\Vert{\D_t}\Vert}{\ell\cdot\lambda_1(\Q)}.
\end{eqnarray*}
%\begin{eqnarray*}
%(\w_t^{\top}\q)^2 &\geq & \frac{\trace(\Q)}{2\lambda_1(\Q)} + \frac{\ell{}V^2/2 + \sum_{i=1}^{\ell}\q_t^{(i)\top}\v_t^{(i)}  + 2\Vert{\Q_t-\Q}\Vert + \frac{1}{2}\trace(\Q_t-\Q)}{\lambda_1(\Q)}.
%\end{eqnarray*}

\end{proof}

\begin{lemma}\label{lem:corrIncBlock}
Suppose that on some iteration $t$ of Algorithm \ref{alg:1} it holds that $(\x^{\top}\hat{\w}_t)^2 \geq \frac{1}{2}$. Then, for any learning rate $\eta_t >0$ and $\alpha>0$ such that $\eta_t\alpha < 1$, it holds that
\begin{eqnarray*}
(\x^{\top}\w_{t+1})^2 \geq (\x^{\top}\hat{\w}_t)^2 + \eta_t\ell\frac{(1-(\x^{\top}\hat{\w}_t)^2)\cdot\delta(\Q) - (\x^{\top}\hat{\w}_t)^2V^2-4\ell^{-1}\Vert{\D_t}\Vert}{\lambda_1(\W_{t+1})-\lambda_2(\W_{t+1})}.
\end{eqnarray*}
\end{lemma}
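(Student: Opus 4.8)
The plan is to combine a standard ``correlation amplification'' inequality for leading eigenvectors with the decomposition $\X_t = \ell\cdot\Q + \V_t + \D_t$. Throughout write $c_t := (\x^\top\hat{\w}_t)^2$, $c_{t+1} := (\x^\top\w_{t+1})^2$, and recall $\W_{t+1} = (1-\eta_t\alpha)\hat{\w}_t\hat{\w}_t^\top + \eta_t\X_t$. The first ingredient is the elementary fact that for any symmetric matrix $\M$ with $\lambda_1(\M) > \lambda_2(\M)$ and leading eigenvector $\u$, and any unit vector $\x$, one has $(\x^\top\u)^2 \ge \big(\x^\top\M\x - \lambda_2(\M)\big)/\big(\lambda_1(\M)-\lambda_2(\M)\big)$; this follows by expanding $\x$ in the eigenbasis of $\M$ and bounding every squared coefficient except the leading one by $\lambda_2(\M)$. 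Applying this with $\M = \W_{t+1}$, $\u = \w_{t+1}$ (in the regime $\lambda_1(\W_{t+1}) > \lambda_2(\W_{t+1})$, which is what makes the claimed bound well-posed) and subtracting $c_t$ from both sides, the lemma follows once we show
\[
\x^\top\W_{t+1}\x - c_t\lambda_1(\W_{t+1}) - (1-c_t)\lambda_2(\W_{t+1}) \ \ge\ \eta_t\ell\big((1-c_t)\delta(\Q) - c_t V^2 - 2\ell^{-1}\|\D_t\|\big),
\]
which is slightly stronger than needed (the statement only asks for $4\ell^{-1}\|\D_t\|$).

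Next I would peel off the regularized rank-one part of $\W_{t+1}$. Using $\x^\top\W_{t+1}\x = (1-\eta_t\alpha)c_t + \eta_t\x^\top\X_t\x$, Weyl's inequality $\lambda_1(\W_{t+1}) \le (1-\eta_t\alpha) + \eta_t\lambda_1(\X_t)$, and Ky Fan's inequality $\lambda_1(\W_{t+1}) + \lambda_2(\W_{t+1}) \le (1-\eta_t\alpha) + \eta_t\big(\lambda_1(\X_t)+\lambda_2(\X_t)\big)$ (the Ky Fan computation of Eq.~\eqref{eq:projCond:3} applied to $\W_{t+1}$), together with the hypothesis $c_t \ge \tfrac12$ --- so that $1-2c_t \le 0$ and the Weyl bound on $\lambda_1(\W_{t+1})$ may be substituted with the correct sign --- the $(1-\eta_t\alpha)$ contributions cancel identically and the left-hand side above is bounded below by $\eta_t\big(\x^\top\X_t\x - c_t\lambda_1(\X_t) - (1-c_t)\lambda_2(\X_t)\big)$. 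Concretely: bound $\lambda_2(\W_{t+1}) \le \big[(1-\eta_t\alpha)+\eta_t(\lambda_1(\X_t)+\lambda_2(\X_t))\big] - \lambda_1(\W_{t+1})$, substitute, and replace the surviving term $(1-2c_t)\lambda_1(\W_{t+1})$ by $(1-2c_t)\big[(1-\eta_t\alpha)+\eta_t\lambda_1(\X_t)\big]$.

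It then remains to show $\x^\top\X_t\x - c_t\lambda_1(\X_t) - (1-c_t)\lambda_2(\X_t) \ge \ell(1-c_t)\delta(\Q) - c_t\ell V^2 - 2\|\D_t\|$. Here I use $\X_t = \ell\cdot\Q + \V_t + \D_t$ with $\Q\x = \lambda_1(\Q)\x$, $\V_t \succeq 0$ and $\trace(\V_t) \le \ell V^2$: the first term satisfies $\x^\top\X_t\x \ge \ell\lambda_1(\Q) - \|\D_t\|$, and for the combination I would split $c_t\lambda_1(\X_t) + (1-c_t)\lambda_2(\X_t) = (1-c_t)\big(\lambda_1(\X_t)+\lambda_2(\X_t)\big) + (2c_t-1)\lambda_1(\X_t)$ --- legitimate, with nonnegative weights, precisely because $c_t \ge \tfrac12$ --- and then apply $\lambda_1(\X_t)+\lambda_2(\X_t) \le \ell\big(\lambda_1(\Q)+\lambda_2(\Q)+V^2\big) + 2\|\D_t\|$ (Eq.~\eqref{eq:lem:rank1proj:2}) and $\lambda_1(\X_t) \le \ell\big(\lambda_1(\Q)+V^2\big) + \|\D_t\|$ (Weyl together with $\lambda_1(\V_t)\le\trace(\V_t)\le\ell V^2$). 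Collecting terms, the coefficient of $\lambda_1(\Q)$ becomes $1-c_t$, that of $\lambda_2(\Q)$ becomes $-(1-c_t)$, that of $V^2$ becomes $-c_t$, and that of $\|\D_t\|$ becomes $-2$. Chaining the three displays and dividing by $\lambda_1(\W_{t+1})-\lambda_2(\W_{t+1}) > 0$ finishes the proof.

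The step I expect to be most delicate is the sign bookkeeping in the previous two paragraphs. Obtaining the coefficient $c_t$ (rather than $1$) in front of $V^2$, and making the $(1-\eta_t\alpha)$ terms cancel exactly, both hinge on invoking $c_t \ge \tfrac12$ in exactly the right way --- once to flip an inequality when substituting the Weyl bound for $\lambda_1(\W_{t+1})$, and once to split $c_t\lambda_1(\X_t)+(1-c_t)\lambda_2(\X_t)$ into a nonnegative combination of $\lambda_1(\X_t)+\lambda_2(\X_t)$ and $\lambda_1(\X_t)$; getting either wrong injects an $O(1-c_t)$ error that swamps the $O(\eta_t\ell)$ right-hand side. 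A secondary point is verifying that the gap $\lambda_1(\W_{t+1})-\lambda_2(\W_{t+1})$ is strictly positive, so that the correlation-amplification estimate applies and the stated inequality is meaningful.
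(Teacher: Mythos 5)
Your proof is correct and takes essentially the same route as the paper's: the correlation-amplification inequality you state is exactly what the paper derives by expanding $\x$ in the eigenbasis of $\W_{t+1}$, and the subsequent use of Ky Fan and Weyl together with $\X_t=\ell\cdot\Q+\V_t+\D_t$ matches the paper's argument, with your splitting $c_t\lambda_1(\X_t)+(1-c_t)\lambda_2(\X_t)$ being just a regrouping of the paper's combination $(\lambda_1(\X_t)+\lambda_2(\X_t))c_t-\lambda_2(\X_t)(2c_t-1)$. Your bookkeeping happens to yield the slightly sharper constant $2\ell^{-1}\Vert{\D_t}\Vert$ in place of the paper's $4\ell^{-1}\Vert{\D_t}\Vert$, which of course implies the stated bound (given $\lambda_1(\W_{t+1})>\lambda_2(\W_{t+1})$, an implicit assumption in the paper as well).
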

\begin{proof}
Fix some iteration $t$. We introduce the short notation $\w = \hat{\w}_t$, $\w'=\w_{t+1}$, $\W = \W_{t+1} = (1-\eta_t\alpha)\hat{\w}_t\hat{\w}_t^{\top} + \eta_t\X_t$, $\lambda_1 = \lambda_1(\W_{t+1})$, $\lambda_2 = \lambda_2(\W_{t+1})$, and for all $i\geq 2$, $\y_i$ is the eigenvector of $\W_{t+1}$ associated with eigenvalue $\lambda_i = \lambda_i(\W_{t+1})$.

It holds that
\begin{eqnarray*}
\lambda_1\w'\w' + \sum_{i=2}^d\lambda_i\y_i\y_i^{\top} = \W = (1-\eta_t\alpha)\w\w^{\top}+\eta_t\X_t.% \\
%&=& \w\w^{\top} + \eta{}s_t^2(\x\x^{\top}+\x\v_t^{\top}+\v_t\x^{\top}+\v_t\v_t^{\top}).
\end{eqnarray*}

Thus, we have that
\begin{eqnarray*}
(\x^{\top}\w')^2 &=& \frac{\x^{\top}\W\x - \sum_{i=2}^d\lambda_i(\x^{\top}\y_i)^2}{\lambda_1} \\
&=& \frac{(1-\eta_t\alpha)(\x^{\top}\w)^2 + \eta_t\x^{\top}\X_t\x - \sum_{i=2}^d\lambda_i(\x^{\top}\y_i)^2}{\lambda_1}.
\end{eqnarray*}

Note that $\sum_{i=2}^d\lambda_i(\x^{\top}\y_i)^2 \leq \lambda_2(\Vert{\x}\Vert^2 - (\x^{\top}\w')^2) = \lambda_2(1 - (\x^{\top}\w')^2)$. Thus, we have that

\begin{eqnarray*}
(\x^{\top}\w')^2  \geq \frac{(1-\eta_t\alpha)(\x^{\top}\w)^2 + \eta_t\x^{\top}\X_t\x - \lambda_2(1 - (\x^{\top}\w')^2)}{\lambda_1}.
\end{eqnarray*}

Rearranging we obtain,
\begin{eqnarray*}
(\x^{\top}\w')^2 &\geq &\frac{(1-\eta_t\alpha)(\x^{\top}\w)^2 +  \eta_t\x^{\top}\X_t\x - \lambda_2}{\lambda_1-\lambda_2} \\
&=& (\x^{\top}\w)^2 + \frac{\eta_t\x^{\top}\X_t\x + (1-\eta_t\alpha-\lambda_1+\lambda_2)(\x^{\top}\w)^2 - \lambda_2}{\lambda_1-\lambda_2} \\
&= & (\x^{\top}\w)^2 + \frac{\eta_t\x^{\top}\X_t\x + (1-\eta_t\alpha-(\lambda_1+\lambda_2))(\x^{\top}\w)^2 + \lambda_2(2(\x^{\top}\w)^2-1)}{\lambda_1-\lambda_2}.
%&=& \frac{(\x^{\top}\w)^2 + \eta\x^{\top}\X_t\x - \lambda_2}{(\lambda_1+\lambda_2)-2\lambda_2}.
\end{eqnarray*}

Note that via Ky Fan's inequality we have that
\begin{eqnarray*}
\lambda_1 + \lambda_2 &\leq &\lambda_1((1-\eta_t\alpha)\w\w^{\top}) + \lambda_2((1-\eta_t\alpha)\w\w^{\top}) + \lambda_1(\eta_t\X_t) + \lambda_2(\eta_t\X_t) \\
&=& 1 - \eta_t\alpha + \eta_t(\lambda_1(\X_t) + \lambda_2(\X_t)).
\end{eqnarray*}

Also, $\lambda_2 = \lambda_2((1-\eta_t\alpha)\w\w^{\top}+\eta_t\X_t) \geq \lambda_2(\eta_t\X_t) = \eta_t\lambda_2(\X_t)$.

Thus, using our assumption that $(\x^{\top}\w)^2 \geq 1/2$, we have that
\begin{eqnarray*}
(\x^{\top}\w')^2 \geq (\x^{\top}\w)^2 + \eta_t\frac{\x^{\top}\X_t\x - (\lambda_1(\X_t) + \lambda_2(\X_t))(\x^{\top}\w)^2 + \lambda_2(\X_t)(2(\x^{\top}\w)^2-1)}{\lambda_1-\lambda_2}. 
\end{eqnarray*}

Note that
\begin{eqnarray*}
\x^{\top}\X_t\x = \x^{\top}(\ell\cdot\Q + \V_t + \D_t)\x \geq \x^{\top}(\ell\cdot\Q + \D_t)\x \geq \ell\cdot\lambda_1(\Q) - \Vert{\D_t}\Vert,
\end{eqnarray*}
\begin{eqnarray*}
\lambda_2(\X_t) = \lambda_2(\ell\cdot\Q + \V_t + \D_t) \geq \lambda_2(\ell\cdot\Q + \D_t) \geq \ell\cdot\lambda_2(\Q) - \Vert{\D_t}\Vert,
%\forall i\in\{1,2\}: ~ \lambda_i(\X_t) = \lambda_i(\ell\cdot\Q + \V_t + \D_t) \geq \lambda_i(\ell\cdot\Q + \D_t) \geq \ell\cdot\lambda_i(\Q) - \Vert{\D_t}\Vert,
\end{eqnarray*}
\begin{eqnarray*}
\lambda_1(\X_t) + \lambda_2(\X_t) &\underset{(a)}{\leq} &\lambda_1(\ell\cdot\Q) + \lambda_2(\ell\cdot\Q) + \lambda_1(\V_t+\D_t) + \lambda_2(\V_t+\D_t) \\
&\leq & \ell(\lambda_1(\Q) + \lambda_2(\Q)) + \lambda_1(\V_t) + \lambda_2(\V_t) + 2\Vert{\D_t}\Vert \\
&\leq & \ell(\lambda_1(\Q) + \lambda_2(\Q)) + \trace(\V_t) + 2\Vert{\D_t}\Vert \\
&\leq & \ell(\lambda_1(\Q) + \lambda_2(\Q) + V^2)  + 2\Vert{\D_t}\Vert,
\end{eqnarray*}
where (a) follows again from Ky Fan's inequality.

Plugging-in all of the above bounds, we have that
\begin{eqnarray*}
(\x^{\top}\w')^2
&\geq&
(\x^{\top}\w)^2 \\
&&+ \eta_t\ell\frac{\lambda_1(\Q) - (\lambda_1(\Q)+\lambda_2(\Q) + V^2)\cdot(\x^{\top}\w)^2 + \lambda_2(\Q)\cdot(2(\x^{\top}\w)^2-1) - 4\ell^{-1}\Vert{\D_t}\Vert}{\lambda_1-\lambda_2} \\
&=&
(\x^{\top}\w)^2 + \eta_t\ell\frac{(1-(\w^{\top}\x)^2)\cdot(\lambda_1(\Q)-\lambda_2(\Q)) - (\x^{\top}\w)^2{}V^2 - 4\ell^{-1}\Vert{\D_t}\Vert}{\lambda_1-\lambda_2} \\
&=&(\x^{\top}\w)^2 + \eta_t\ell\frac{(1-(\x^{\top}\w)^2)\cdot\delta(\Q) - (\x^{\top}\w)^2V^2-4\ell^{-1}\Vert{\D_t}\Vert}{\lambda_1-\lambda_2}.
\end{eqnarray*}
\end{proof}

\begin{lemma}\label{lem:goodProj}
Suppose that when applying Algorithm \ref{alg:1}, the following conditions hold:
\begin{align}
\forall t\in[T]:~\frac{1}{\ell}\Vert{\D_t}\Vert & \leq  \epsilon \leq \frac{1}{72\lambda_1(\Q)}\left({\delta(\Q)^2-V^4-2V^2\lambda_1(\Q)}\right) \label{eq:epsilonBound},\\
\eta_t &\leq  \min\Big\{\frac{\epsilon}{4\ell\lambda_1(\Q)\cdot(V^2+4\epsilon)},~\frac{1}{\ell(R+V)^2}\Big\}, \nonumber \\
\gamma_t &\leq \min\Big\{\frac{\epsilon}{4\lambda_1(\Q)}, ~18\epsilon\eta_t\ell\Big\}, ~ \textrm{and} \nonumber \\
\alpha &\leq  \frac{\ell}{4(\delta(\Q)+V^2)}\left({\delta(\Q)^2-V^4-2V^2\lambda_1(\Q)}\right), \nonumber \\
(\w_1^{\top}\x)^2 & \geq 1 - \frac{\delta(\Q)-V^2-\ell^{-1}\alpha -4\epsilon}{2\lambda_1(\Q)}. \nonumber
\end{align}
Then, for all $t\in[T]$, $\w_{t+1}\w_{t+1}^{\top} = \Pi_{\mS}[\W_{t+1}]$.
\end{lemma}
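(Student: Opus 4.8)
The plan is to prove, by induction on $t\in\{1,\dots,T\}$, the single invariant
\[
(\hat{\w}_t^{\top}\x)^2 \;\geq\; \tau \;:=\; 1 - \frac{\delta(\Q)-V^2-\ell^{-1}\alpha-4\epsilon}{2\lambda_1(\Q)},
\]
and to observe that this invariant \emph{at step $t$} already implies the conclusion $\w_{t+1}\w_{t+1}^{\top}=\Pi_{\mS}[\W_{t+1}]$. The base case $t=1$ is exactly the assumed bound on $(\w_1^{\top}\x)^2$. For the implication, note first that the standing conditions force $\eta_t\alpha<1$ (combine the bounds on $\eta_t$ and $\alpha$) and $\tau\geq\tfrac12$ (since $\delta(\Q)-V^2-\ell^{-1}\alpha-4\epsilon\leq\delta(\Q)\leq\lambda_1(\Q)$); then, substituting $\Vert\D_t\Vert\leq\ell\epsilon$ into the right-hand side of the hypothesis of Lemma \ref{lem:rank1ProjMat}, the invariant is precisely that hypothesis, so Lemma \ref{lem:rank1ProjMat} yields $\w_{t+1}\w_{t+1}^{\top}=\Pi_{\mS}[\W_{t+1}]$. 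Hence the whole lemma follows once the invariant is propagated from $t$ to $t+1$.

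For the inductive step I first record two spectral facts about $\W_{t+1}$. Since $\w_{t+1}\w_{t+1}^{\top}=\Pi_{\mS}[\W_{t+1}]$ is rank one, the projection formula used in the proof of Lemma \ref{lem:projCond} forces $\lambda_1(\W_{t+1})\geq 1+\lambda_2(\W_{t+1})$, i.e. $\rho:=\lambda_1(\W_{t+1})-\lambda_2(\W_{t+1})\geq 1$. On the other hand, from $\W_{t+1}=(1-\eta_t\alpha)\hat{\w}_t\hat{\w}_t^{\top}+\eta_t\X_t\succeq\eta_t\X_t$ together with Weyl's inequality and $\Vert\D_t\Vert\leq\ell\epsilon$ one gets $\rho\leq 1+\eta_t(\lambda_1(\X_t)-\lambda_2(\X_t))\leq 1+\eta_t\ell(\delta(\Q)+V^2+2\epsilon)$, and the conditions on $\eta_t,\epsilon$ (using $V^2<\delta(\Q)^2/(2\lambda_1(\Q))\leq\lambda_1(\Q)/2$ from Assumption \ref{ass:dist}(4)) bound this by $10/9$, so $\rho\in[1,10/9]$.

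Now since $(\hat{\w}_t^{\top}\x)^2\geq\tau\geq\tfrac12$, Lemma \ref{lem:corrIncBlock} applies; substituting $\ell^{-1}\Vert\D_t\Vert\leq\epsilon$ and writing $c_t:=(\x^{\top}\hat{\w}_t)^2$ and $g(c):=(1-c)\delta(\Q)-cV^2-4\epsilon$, it gives $(\x^{\top}\w_{t+1})^2\geq c_t+\eta_t\ell\,g(c_t)/\rho$. Because $\hat{\w}_{t+1}$ satisfies $\Vert\hat{\w}_{t+1}\hat{\w}_{t+1}^{\top}-\w_{t+1}\w_{t+1}^{\top}\Vert_F\leq\gamma_t$ and the spectral norm is dominated by the Frobenius norm, $|(\x^{\top}\hat{\w}_{t+1})^2-(\x^{\top}\w_{t+1})^2|\leq\gamma_t$, hence $(\x^{\top}\hat{\w}_{t+1})^2\geq h(c_t)$ where $h(c):=c+\eta_t\ell\,g(c)/\rho-\gamma_t$. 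The key structural point is that $g$, and therefore $h$, is \emph{affine} in $c$ (slope $1-\eta_t\ell(\delta(\Q)+V^2)/\rho$, independent of $c$), so the minimum of $h$ over $c\in[\tau,1]\ni c_t$ is attained at an endpoint, and it suffices to verify $h(\tau)\geq\tau$ and $h(1)\geq\tau$. For $c=1$: $h(1)=1-\eta_t\ell(V^2+4\epsilon)/\rho-\gamma_t\geq 1-\eta_t\ell(V^2+4\epsilon)-\gamma_t\geq 1-\epsilon/(2\lambda_1(\Q))\geq\tau$, the last step reducing (via $\ell^{-1}\alpha\leq(\delta(\Q)-V^2)/4$) to $5\epsilon\leq\delta(\Q)-V^2-\ell^{-1}\alpha$, which the $\epsilon$- and $\alpha$-bounds give with room. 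For $c=\tau$: $h(\tau)\geq\tau$ is equivalent to $\eta_t\ell\,g(\tau)/\rho\geq\gamma_t$; expanding $g(\tau)=\bigl(\Delta_0-8\lambda_1(\Q)\epsilon-(\ell^{-1}\alpha+4\epsilon)(\delta(\Q)+V^2)\bigr)/(2\lambda_1(\Q))$ with $\Delta_0:=\delta(\Q)^2-V^4-2V^2\lambda_1(\Q)$ and using $\ell^{-1}\alpha(\delta(\Q)+V^2)\leq\Delta_0/4$, $\delta(\Q)+V^2\leq\tfrac32\lambda_1(\Q)$, and $\epsilon\leq\Delta_0/(72\lambda_1(\Q))$ gives $g(\tau)\geq 20\epsilon$, so with $\rho\leq 10/9$ and $\gamma_t\leq 18\epsilon\eta_t\ell$ we get $\eta_t\ell\,g(\tau)/\rho\geq 18\epsilon\eta_t\ell\geq\gamma_t$. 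Thus $(\x^{\top}\hat{\w}_{t+1})^2\geq\tau$, closing the induction.

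I expect the main obstacle to be the last paragraph's bookkeeping: the bound $g(\tau)\geq 20\epsilon$ together with the spectral-gap bound $\rho\leq 10/9$ are exactly where the quantitative form of all four conditions (on $\epsilon$, $\eta_t$, $\gamma_t$, $\alpha$) is consumed, the margins are thin, and the chain of inequalities additionally needs the auxiliary facts $V^2<\delta(\Q)^2/(2\lambda_1(\Q))$ and $\ell^{-1}\alpha\leq(\delta(\Q)-V^2)/4$ drawn from Assumption \ref{ass:dist}(4) and the $\alpha$-bound. Everything else — the two reductions to Lemmas \ref{lem:rank1ProjMat} and \ref{lem:corrIncBlock}, the $\eta_t\alpha<1$ and $\tau\geq1/2$ checks, and the observation that $h$ is affine so only endpoints matter — is routine.
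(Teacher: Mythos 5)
Your proof is correct, and its skeleton is exactly the paper's: induct on the invariant $(\hat{\w}_t^{\top}\x)^2\geq\tau$, note that the invariant at step $t$ (with $\Vert{\D_t}\Vert\leq\ell\epsilon$) is precisely the hypothesis of Lemma \ref{lem:rank1ProjMat} and hence yields the rank-one projection, and propagate the invariant via Lemma \ref{lem:corrIncBlock}. The one genuine difference is how the inductive step is closed. The paper splits into two cases according to whether $(\hat{\w}_t^{\top}\x)^2$ exceeds $1-\frac{\delta(\Q)-V^2-\ell^{-1}\alpha-5\epsilon}{2\lambda_1(\Q)}$ (in the first case the correlation may drop, but by at most $\eta_t\ell(V^2+4\epsilon)+\gamma_t\leq\epsilon/(2\lambda_1(\Q))$; in the second it does not decrease), whereas you observe that the one-step lower bound is affine in $c=(\hat{\w}_t^{\top}\x)^2$ and simply check the endpoints $c=\tau$ and $c=1$, which correspond morally to the paper's two cases. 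Your version is cleaner and quantitatively tighter: the bound $\rho=\lambda_1(\W_{t+1})-\lambda_2(\W_{t+1})\leq 10/9$, which you extract from the $\eta_t$ condition via $\lambda_1(\X_t)-\lambda_2(\X_t)\leq\ell(\delta(\Q)+V^2+2\epsilon)$, replaces the paper's crude $\rho\leq\lambda_1(\W_{t+1})\leq 2$, and is exactly what lets you recover the stated constant $18$ in the $\gamma_t$ condition ($g(\tau)\geq 20\epsilon$ and $20\cdot\tfrac{9}{10}=18$); the paper's own case-two chain, as justified with $\lambda_1-\lambda_2\leq 2$, in fact only supports $\gamma_t\leq 9\epsilon\eta_t\ell$ at that step. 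Your auxiliary checks ($\eta_t\alpha<1$, $\tau\geq 1/2$, $\ell^{-1}\alpha\leq(\delta(\Q)-V^2)/4$, $V^2\leq\lambda_1(\Q)/2$, and the correct use of $\rho\geq 1$ at $c=1$ where $g(1)<0$ versus $\rho\leq 10/9$ at $c=\tau$ where $g(\tau)>0$) all verify, so I see no gap.
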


\begin{proof}
Note that under the assumptions of the lemma it holds on any iteration $t$ that $\eta_t\alpha <1$. Thus, in light of Lemma \ref{lem:rank1ProjMat}, it suffices to show that on each iteration $t$, it holds that
\begin{eqnarray*}
(\hat{\w}_t^{\top}\x)^2 \geq 1 - \frac{\delta(\Q)-V^2 -\ell^{-1}\alpha -4\epsilon}{2\lambda_1(\Q)}.
\end{eqnarray*}

We prove this inequality indeed holds for all $t\in[T]$ by induction. Note that for $t=1$, this clearly holds by our assumption on $\hat{\w}_1$.%$\w_1=\hat{\w}_1$.

Suppose now the assumption holds for some $t \geq 1$. In the following we let $\lambda_i$ denote the $i$th largest eigenvalue of the matrix $\W_{t+1}:=(1-\eta_t\alpha)\hat{\w}_t\hat{\w}_t^{\top}+\eta_t\X_t$. Note that under the induction hypothesis and Assumption \ref{ass:dist}, it in particular holds that $(\hat{\w}_t^{\top}\x)^2 \geq 1/2$, and hence we can invoke Lemma \ref{lem:corrIncBlock}.

We consider two cases. If $(\hat{\w}_t^{\top}\x)^2 \geq1 - \frac{\delta(\Q)-V^2-\ell^{-1}\alpha-5\epsilon}{2\lambda_1(\Q)}$, then using Lemma \ref{lem:corrIncBlock} we have that
\begin{eqnarray*}
(\w_{t+1}^{\top}\x)^2 &\geq &1 - \frac{\delta(\Q)-V^2 -\ell^{-1}\alpha -5\epsilon}{2\lambda_1(\Q)} - \eta_t\ell\frac{V^2+4\epsilon}{\lambda_1-\lambda_2} \\
&\underset{(a)}{\geq} &1 - \frac{\delta(\Q)-V^2 -\ell^{-1}\alpha -5\epsilon}{2\lambda_1(\Q)} - \eta_t\ell\left({V^2+4\epsilon}\right),
\end{eqnarray*}
were (a) follows since under the induction hypothesis, we in particular have that  $\w_{t+1}\w_{t+1}^{\top}=\Pi_{\mS}[\W_{t+1}]$ (see Lemma \ref{lem:rank1ProjMat}), which in turn implies that $\lambda_1 \geq 1 + \lambda_2$ (see proof of Lemma \ref{lem:projCond}).

Thus, for any $\eta_t \leq \frac{\epsilon}{4\ell\lambda_1(\Q)\cdot(V^2+4\epsilon)}$ we obtain
\begin{eqnarray*}
(\w_{t+1}^{\top}\x)^2 \geq1 - \frac{\delta(\Q)-V^2 -\ell^{-1}\alpha -\frac{9}{2}\epsilon}{2\lambda_1(\Q)}.
\end{eqnarray*}
Moreover, we have that
\begin{eqnarray*}
(\hat{\w}_{t+1}^{\top}\x)^2 &\geq& (\w_{t+1}^{\top}\x)^2 - \Vert{\w_{t+1}\w_{t+1}^{\top}-\hat{\w}_{t+1}\hat{\w}_{t+1}}\Vert_F \geq (\w_{t+1}^{\top}\x)^2 - \gamma_t \\
&\geq & 1 - \frac{\delta(\Q)-V^2 -\ell^{-1}\alpha -\frac{9}{2}\epsilon}{2\lambda_1(\Q)} - \gamma_t.
\end{eqnarray*}
Thus, for any $\gamma_t \leq \frac{\epsilon}{4\lambda_1(\Q)}$ the claim indeed holds for the first case.

On the other hand, in case $(\hat{\w}_t^{\top}\x)^2 < 1 - \frac{\delta(\Q)-V^2 -\ell^{-1}\alpha -5\epsilon}{2\lambda_1(\Q)} $, by an application of Lemma \ref{lem:corrIncBlock} we have that
\begin{eqnarray*}
(\w_{t+1}^{\top}\x)^2 &\geq &(\hat{\w}_t^{\top}\x)^2 + \eta_t\ell\frac{(1-(\hat{\w}_t^{\top}\x)^2)\cdot\delta(\Q) - (\hat{\w}_t^{\top}\x)^2V^2-4\epsilon}{\lambda_1-\lambda_2} \\
&\underset{(a)}{\geq} & (\hat{\w}_t^{\top}\x)^2 + \eta_t\ell\frac{\delta(\Q) - (\delta(\Q)+V^2)\left({1 - \frac{\delta(\Q)-V^2 -\ell^{-1}\alpha -5\epsilon}{2\lambda_1(\Q)}}\right) - 4\epsilon}{\lambda_1-\lambda_2} \\
&= & (\hat{\w}_t^{\top}\x)^2 + \eta_t\ell\frac{\frac{\delta(\Q)-V^2-\ell^{-1}\alpha-5\epsilon}{2\lambda_1(\Q)}\cdot\delta(\Q) - V^2\cdot\left({1 - \frac{\delta(\Q)-V^2-\ell^{-1}\alpha-5\epsilon}{2\lambda_1(\Q)}}\right) - 4\epsilon}{\lambda_1-\lambda_2} \\
&\underset{(b)}{\geq} & (\hat{\w}_t^{\top}\x)^2 + \eta_t\ell\frac{\frac{\delta(\Q)-V^2-\ell^{-1}\alpha}{2\lambda_1(\Q)}\cdot\delta(\Q) - V^2\cdot\left({1 - \frac{\delta(\Q)-V^2-\ell^{-1}\alpha}{2\lambda_1(\Q)}}\right) - 9\epsilon}{\lambda_1-\lambda_2} \\
&= & (\hat{\w}_t^{\top}\x)^2 + \eta_t\ell\frac{\delta(\Q)^2-V^4-2V^2\lambda_1(\Q) -  (\delta(\Q)+V^2)\ell^{-1}\alpha- 18\epsilon\lambda_1(\Q)}{2\lambda_1(\Q)(\lambda_1-\lambda_2)},
\end{eqnarray*}
where (a) follows from our assumption on $(\hat{\w}_t^{\top}\x)^2$ in this second case, and (b) follows, since  Assumption \ref{ass:dist} implies that $\max\{\delta(\Q),V^2\} \leq \lambda_1(\Q)$.

Thus, for any
\begin{eqnarray*}
\epsilon &\leq & \frac{1}{72\lambda_1(\Q)}\left({\delta(\Q)^2-V^4-2V^2\lambda_1(\Q)}\right) \quad \textrm{and} \\
\alpha &\leq & \frac{\ell}{4(\delta(\Q)+V^2)}\left({\delta(\Q)^2-V^4-2V^2\lambda_1(\Q)}\right)
\end{eqnarray*}
we have that
\begin{eqnarray*}
(\w_{t+1}^{\top}\x)^2 &\geq &(\hat{\w}_t^{\top}\x)^2 + \eta_t\ell\frac{\delta(\Q)^2-V^4-2V^2\lambda_1(\Q)}{4\lambda_1(\Q)(\lambda_1-\lambda_2)}.% \\
%&\underset{(a)}{\geq} & (\hat{\w}_t^{\top}\x)^2 \geq 1 - \frac{\delta(\Q)-V^2-4\epsilon}{2\lambda_1(\Q)},
\end{eqnarray*}
%where (a) follows since Assumption \ref{ass:dist} implies that $\delta(\Q)^2-V^4-V^2\lambda_1(\Q) \geq 0$.

Moreover, as before, we have that
\begin{eqnarray*}
(\hat{\w}_{t+1}^{\top}\x)^2 &\geq& (\w_{t+1}^{\top}\x)^2 - \gamma_t \\
& \geq &(\hat{\w}_t^{\top}\x)^2  + \eta_t\ell\frac{\delta(\Q)^2-V^4-2V^2\lambda_1(\Q)}{4\lambda_1(\Q)(\lambda_1-\lambda_2)} - \gamma_t \\
&\underset{(a)}{\geq} &(\hat{\w}_t^{\top}\x)^2  + \eta_t\ell\frac{\delta(\Q)^2-V^4-2V^2\lambda_1(\Q)}{4\lambda_1(\Q)} - \gamma_t,
\end{eqnarray*}
where (a) follows since Assumption \ref{ass:dist} implies that $\delta(\Q)^2-V^4-2V^2\lambda_1(\Q) \geq 0$, and since
\begin{eqnarray*}
\lambda_1-\lambda_2 \leq \lambda_1 \leq 1-\eta_t\alpha + \eta_t\Vert{\X_t}\Vert \leq 1 + \eta_t\sum_{i=1}^{\ell}\Vert{\q_t^{(i)}+\v_t^{(i)}}\Vert^2 \leq  1+ \eta_t\ell(R+V)^2 \leq 2,
\end{eqnarray*}
where the last inequality follows from our assumption that $\eta_t \leq \frac{1}{\ell(R+V)^2}$.
Thus, for any $\gamma_t \leq \eta_t\ell\frac{\delta(\Q)^2-V^4-2V^2\lambda_1(\Q)}{4\lambda_1(\Q)}$ (which in particular holds for $\gamma_t \leq 18\eta_t\ell\epsilon$), we have that 
\begin{eqnarray*}
(\hat{\w}_{t+1}^{\top}\x)^2 \geq (\hat{\w}_t^{\top}\x)^2 \geq 1 - \frac{\delta(\Q)-V^2-\ell^{-1}\alpha-4\epsilon}{2\lambda_1(\Q)},
\end{eqnarray*}
as needed.
\end{proof}

\begin{lemma}[Convergence of Algorithm \ref{alg:1}]\label{lem:metaAlgConv}
Consider applying Algorithm \ref{alg:1}  to a sequence of vectors $\{(\x_t^{(1)},\dots,\x_t^{(\ell)})\}_{t\in[T]}$ which follow Assumption \ref{ass:dist}, and suppose that all conditions stated in Lemma \ref{lem:goodProj} hold. Then, for any unit vector $\w$ it holds that
\begin{align*}
&\sum_{t=1}^T\sum_{i=1}^{\ell}(\w^{\top}\x_t^{(i)})^2 - \sum_{t=1}^T\sum_{i=1}^{\ell}(\hat{\w}_t^{\top}\x_t^{(i)})^2  \leq \\
& \frac{1}{\eta_1} +\sum_{t=2}^T\left({\frac{1-\eta_t\alpha}{2\eta_t} - \frac{1}{2\eta_{t-1}}}\right)\Vert{\hat{\w}_t\hat{\w}_t^{\top} - \w\w^{\top}}\Vert_F^2 \\
&+\sum_{t=1}^T\left({\frac{3\sqrt{2}}{2}\frac{\gamma_t}{\eta_t} + \frac{\eta_t}{2}( \ell^2(R+V)^4+\alpha^2)}\right).
\end{align*}
\end{lemma}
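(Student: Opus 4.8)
The plan is to run the classical Online Gradient Ascent regret analysis on the convex ``lifting'' of Online PCA to online \emph{linear} optimization of $\W\mapsto\langle\W,\X_t\rangle$ over the spectrahedron $\mS$, while absorbing the two non-standard features of Algorithm~\ref{alg:1}: the shrinkage factor $(1-\eta_t\alpha)$ multiplying the previous iterate, and the $\gamma_t$-approximate (rank-one) replacement of the exact Euclidean projection onto $\mS$. I would write $\hat{\W}_t:=\hat{\w}_t\hat{\w}_t^{\top}$ and set $a_t:=\|\hat{\W}_t-\w\w^{\top}\|_F^2=2-2(\hat{\w}_t^{\top}\w)^2\le 2$; since $\X_t=\sum_{i=1}^{\ell}\x_t^{(i)}\x_t^{(i)\top}$, the quantity to be bounded equals $\sum_{t=1}^{T}\langle\w\w^{\top}-\hat{\W}_t,\X_t\rangle$. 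The crucial enabler is that, under the stated hypotheses (which are precisely the hypotheses of Lemma~\ref{lem:goodProj}), Lemma~\ref{lem:goodProj} guarantees $\w_{t+1}\w_{t+1}^{\top}=\Pi_{\mS}[\W_{t+1}]$ on every round $t$, so that non-expansiveness of $\Pi_{\mS}$ against the feasible point $\w\w^{\top}\in\mS$ becomes available.

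First, for each $t$ I would apply $\|\Pi_{\mS}[\Z_{t+1}]-\w\w^{\top}\|_F^2\le\|\Z_{t+1}-\w\w^{\top}\|_F^2$ with $\Z_{t+1}:=(1-\eta_t\alpha)\hat{\W}_t+\eta_t\X_t$ (so $\Pi_{\mS}[\Z_{t+1}]=\w_{t+1}\w_{t+1}^{\top}$), expand the right-hand side, and solve for the linear term to get
\[
\langle\w\w^{\top}-(1-\eta_t\alpha)\hat{\W}_t,\X_t\rangle\ \le\ \tfrac{1}{2\eta_t}\Big(\|(1-\eta_t\alpha)\hat{\W}_t-\w\w^{\top}\|_F^2-\|\w_{t+1}\w_{t+1}^{\top}-\w\w^{\top}\|_F^2\Big)+\tfrac{\eta_t}{2}\|\X_t\|_F^2 .
\]
Since $\langle\w\w^{\top}-\hat{\W}_t,\X_t\rangle=\langle\w\w^{\top}-(1-\eta_t\alpha)\hat{\W}_t,\X_t\rangle-\eta_t\alpha\langle\hat{\W}_t,\X_t\rangle$ and $\langle\hat{\W}_t,\X_t\rangle=\sum_{i=1}^{\ell}(\hat{\w}_t^{\top}\x_t^{(i)})^2\ge0$, the correction is nonpositive and can be dropped, so the left side may be replaced by $\langle\w\w^{\top}-\hat{\W}_t,\X_t\rangle$ in the inequality above.

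Then I would carry out two substitutions. The shrinkage is handled by the exact identity $\|(1-\eta_t\alpha)\hat{\W}_t-\w\w^{\top}\|_F^2=(1-\eta_t\alpha)a_t+\eta_t^2\alpha^2$, which follows by expanding and using $\|\hat{\W}_t\|_F^2=1$ and $\langle\hat{\W}_t,\w\w^{\top}\rangle=(\hat{\w}_t^{\top}\w)^2=1-a_t/2$. The approximate projection is handled by the triangle inequality $\|\hat{\W}_{t+1}-\w\w^{\top}\|_F\le\|\w_{t+1}\w_{t+1}^{\top}-\w\w^{\top}\|_F+\|\hat{\W}_{t+1}-\w_{t+1}\w_{t+1}^{\top}\|_F\le\|\w_{t+1}\w_{t+1}^{\top}-\w\w^{\top}\|_F+\gamma_t$; squaring and using $\|\w_{t+1}\w_{t+1}^{\top}-\w\w^{\top}\|_F\le\sqrt2$ and $\gamma_t\le\sqrt2$ (both valid under the smallness of $\gamma_t$ imposed in Lemma~\ref{lem:goodProj}) yields $\|\w_{t+1}\w_{t+1}^{\top}-\w\w^{\top}\|_F^2\ge a_{t+1}-3\sqrt2\,\gamma_t$. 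Plugging both in — and keeping the harmless extra $\tfrac{\eta_t}{2}\alpha^2$ so as to match the stated form — gives the per-round bound $\langle\w\w^{\top}-\hat{\W}_t,\X_t\rangle\le\tfrac{(1-\eta_t\alpha)a_t}{2\eta_t}-\tfrac{a_{t+1}}{2\eta_t}+\tfrac{3\sqrt2}{2}\tfrac{\gamma_t}{\eta_t}+\tfrac{\eta_t}{2}(\|\X_t\|_F^2+\alpha^2)$.

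Finally, I would sum over $t=1,\dots,T$ and telescope the $a_t$-terms: reindexing $\sum_{t=1}^{T}a_{t+1}/(2\eta_t)=\sum_{t=2}^{T+1}a_t/(2\eta_{t-1})$, they collapse to $\tfrac{(1-\eta_1\alpha)a_1}{2\eta_1}+\sum_{t=2}^{T}\big(\tfrac{1-\eta_t\alpha}{2\eta_t}-\tfrac{1}{2\eta_{t-1}}\big)a_t-\tfrac{a_{T+1}}{2\eta_T}$. Dropping the last (nonpositive) term, using $1-\eta_1\alpha\le1$ and $a_1\le2$ to bound the first term by $1/\eta_1$, and bounding $\|\X_t\|_F\le\sum_{i=1}^{\ell}\|\x_t^{(i)}\|^2\le\ell(R+V)^2$ (using $\|\x_t^{(i)}\|=\|\q_t^{(i)}+\v_t^{(i)}\|\le R+V$) so that $\|\X_t\|_F^2\le\ell^2(R+V)^4$, produces exactly the claimed inequality. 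The only part that needs care is propagating the shrinkage factor through the telescope, which is what turns the usual coefficient $\tfrac{1}{2\eta_t}-\tfrac{1}{2\eta_{t-1}}$ into $\tfrac{1-\eta_t\alpha}{2\eta_t}-\tfrac{1}{2\eta_{t-1}}$ and what generates the residual $\eta_t^2\alpha^2$; aside from that the argument is the standard OGA computation, and there is no genuine obstacle once Lemma~\ref{lem:goodProj} provides the exact rank-one projection.
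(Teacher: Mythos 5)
Your proposal is correct and follows essentially the same route as the paper: invoke Lemma \ref{lem:goodProj} to get exact rank-one projections, apply non-expansiveness of $\Pi_{\mS}$ against $\w\w^{\top}$, absorb the $\gamma_t$-approximation via the same $3\sqrt{2}\gamma_t$ bound, and telescope. The only (cosmetic) difference is how the regularizer is absorbed — you expand around $(1-\eta_t\alpha)\hat{\w}_t\hat{\w}_t^{\top}-\w\w^{\top}$ and use the identity $\Vert{(1-\eta_t\alpha)\hat{\w}_t\hat{\w}_t^{\top}-\w\w^{\top}}\Vert_F^2=(1-\eta_t\alpha)a_t+\eta_t^2\alpha^2$, whereas the paper groups the update as $\hat{\w}_t\hat{\w}_t^{\top}+\eta_t(\X_t-\alpha\hat{\w}_t\hat{\w}_t^{\top})$ and uses $2(\hat{\w}_t\hat{\w}_t^{\top}-\w\w^{\top})\bullet(-\alpha\hat{\w}_t\hat{\w}_t^{\top})=-\alpha\Vert{\hat{\w}_t\hat{\w}_t^{\top}-\w\w^{\top}}\Vert_F^2$ — both yielding the identical per-round inequality.
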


\begin{proof}
Fix some unit vector $\w$. By an application of Lemma \ref{lem:goodProj}, it holds for all $t\in[T]$ that $\w_{t+1}\w_{t+1}^{\top} = \Pi_{\mS}[\W_{t+1}]$.

Thus, using standard arguments\footnote{see for instance the analysis of Online Gradient Descent in \cite{Hazan16}}, we have that for all $t\in[T]$ it holds that
\begin{eqnarray*}
\Vert{\w_{t+1}\w_{t+1}^{\top}-\w\w^{\top}}\Vert_F^2 &\leq &\Vert{\W_{t+1}-\w\w^{\top}}\Vert_F^2 \\
&= &\Vert{(1-\eta_t\alpha)\hat{\w}_t\hat{\w}_t^{\top}+\eta_t\X_t-\w\w^{\top}}\Vert_F^2 \\
&=&
\Vert{\hat{\w}_t\hat{\w}_t^{\top} - \w\w^{\top}}\Vert_F^2 \\
&&+ 2\eta_t(\hat{\w}_t\hat{\w}_t^{\top} - \w\w^{\top})\bullet(\X_t-\alpha\hat{\w}_t\hat{\w}_t^{\top}) \\
&&+ \eta_t^2\Vert{\X_t-\alpha\hat{\w}_t\hat{\w}_t^{\top}}\Vert_F^2.
\end{eqnarray*}

Note that
\begin{align*}
\Vert{\hat{\w}_{t+1}\hat{\w}_{t+1}^{\top} - \w\w^{\top}}\Vert_F^2  &= \Vert{\hat{\w}_{t+1}\hat{\w}_{t+1}^{\top} + \w_{t+1}\w_{t+1}^{\top} - \w_{t+1}\w_{t+1}^{\top} - \w\w^{\top}}\Vert_F^2 \\
&\underset{(a)}{\leq}  \Vert{\w_{t+1}\w_{t+1}^{\top} - \w\w^{\top}}\Vert_F^2 + 3\sqrt{2}\Vert{\w_{t+1}\w_{t+1}^{\top} - \hat{\w}_{t+1}\hat{\w}_{t+1}^{\top}}\Vert_F \\
&\leq \Vert{\w_{t+1}\w_{t+1}^{\top} - \w\w^{\top}}\Vert_F^2 + 3\sqrt{2}\gamma_t,
\end{align*}
where (a) follows since for any two unit vectors $\y,\z$ it holds that $\Vert{\y\y^{\top}-\z\z^{\top}}\Vert_F \leq \sqrt{2}$.

Note also that
\begin{eqnarray*}
2(\hat{\w}_t\hat{\w}_t^{\top} - \w\w^{\top})\bullet(-\alpha\hat{\w}_t\hat{\w}_t^{\top}) = -\alpha(2 - 2(\w^{\top}\hat{\w}_t)^2) = -\alpha\Vert{\hat{\w}_t\hat{\w}_t^{\top}-\w\w^{\top}}\Vert_F^2.
\end{eqnarray*}

Combining the three bounds above, we obtain
\begin{align*}
&(\w\w^{\top}-\hat{\w}_t\hat{\w}_t^{\top})\bullet\X_t \leq \\
&\frac{1}{2\eta_t}\left({(1-\eta_t\alpha)\Vert{\hat{\w}_t\hat{\w}_t^{\top} - \w\w^{\top}}\Vert_F^2 -  \Vert{\hat{\w}_{t+1}\hat{\w}_{t+1}^{\top} - \w\w^{\top}}\Vert_F^2}\right) \\
&+ \frac{3\sqrt{2}\gamma_t}{2\eta_t} + \frac{\eta_t}{2}\Vert{\X_t-\alpha\hat{\w}_t\hat{\w}_t^{\top}}\Vert_F^2.
\end{align*}

Note that $\Vert{\X_t-\alpha\hat{\w}_t\hat{\w}_t^{\top}}\Vert_F^2 \leq \Vert{\X_t}\Vert_F^2 + \alpha^2$. Summing over all iterations we obtain the bound
\begin{eqnarray*}
\sum_{t=1}^T\w^{\top}\X_t\w - \sum_{t=1}^T\hat{\w}_t^{\top}\X_t\hat{\w}_t &\leq &\frac{1-\eta_1\alpha}{2\eta_1}\Vert{\hat{\w}_1\hat{\w}_1^{\top} - \w\w^{\top}}\Vert_F^2 \\
&&+\sum_{t=2}^T\left({\frac{1-\eta_t\alpha}{2\eta_t} - \frac{1}{2\eta_{t-1}}}\right)\Vert{\hat{\w}_t\hat{\w}_t^{\top} - \w\w^{\top}}\Vert_F^2 \\
&&+\sum_{t=1}^T\left({\frac{3\sqrt{2}}{2}\frac{\gamma_t}{\eta_t} + \frac{\eta_t}{2}(\Vert{\X_t}\Vert_F^2+\alpha^2)}\right).
\end{eqnarray*}

Finally, note that $\Vert{\hat{\w}_1\hat{\w}_1^{\top} - \w\w^{\top}}\Vert_F^2 \leq 2$ and that under Assumption \ref{ass:dist}, it holds for all $t\in[T]$ that
\begin{eqnarray}\label{eq:normBound}
\Vert{\X_t}\Vert_F &=& \Vert{\sum_{i=1}^{\ell}\x_t^{(i)}\x_t^{(i)\top}}\Vert_F \leq \sum_{i=1}^{\ell}\Vert{\x_t^{(i)}\x_t^{(i)\top}}\Vert_F =\sum_{i=1}^{\ell}\Vert{\x_t^{(i)}}\Vert^2 \nonumber \\
&=& \sum_{i=1}^{\ell}\Vert{\q_t^{(i)}+\v_t^{(i)}}\Vert^2 \leq \ell(R+V)^2.
\end{eqnarray}
Hence, the lemma follows.
\end{proof}

As an example, if we invoke Lemma \ref{lem:metaAlgConv} with a fixed learning rate $\eta>0$ (i.e, $\eta_t=\eta$ for all $t\in[T]$), zero regularization ($\alpha =0$), and $\gamma_t=0$ for all $t\in[T]$, we get a regret bound 
\begin{eqnarray*}
\frac{1}{\eta} + \frac{\eta}{2}{}T\ell^2(R+V)^4,
\end{eqnarray*}
 which by an appropriate choice of $\eta$ and treating $\ell$ as a constant, yields the familiar $O(\sqrt{T})$ regret bound for Online Gradient Ascent with linear payoff functions \cite{Hazan16}. A Similar treatment with $\alpha >0$ and vanishing step-size can be shown (as we indeed show in the sequel) to yield a $O(\log{}T)$ regret bound.

 \subsection{Convergence of Algorithm \ref{alg:opm}}
 
\begin{lemma}\label{lem:opmApprox}
Consider some iteration $t$ of Algorithm \ref{alg:opm}, and let $\w_{t+1}$ denote the leading eigenvector of the matrix $\W_{t+1}:=(1+\eta_t\alpha)\hat{\w}_t\hat{\w}_t^{\top}+\eta_t\X_t$. If $\eta_t \leq \frac{1}{3\ell(R+V)^2}$ and $\alpha \leq \ell(R+V)^2$, then it holds that 
\begin{eqnarray*}
\Vert{\hat{\w}_{t+1}\hat{\w}_{t+1} - \w_{t+1}\w_{t+1}^{\top}}\Vert_F \leq \sqrt{33}(\eta_t\ell(R+V)^2)^2.
\end{eqnarray*}

\end{lemma}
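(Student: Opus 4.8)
Here is the plan I would follow.

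The plan is to regard $\hat{\w}_{t+1}$ as a single normalized power-method step of the matrix $\W_{t+1}=(1-\eta_t\alpha)\hat{\w}_t\hat{\w}_t^\top+\eta_t\X_t$ and to exploit that $\W_{t+1}$ is a rank-one matrix plus a spectrally tiny positive semidefinite perturbation. The first observation is that $\W_{t+1}\hat{\w}_t=(1-\eta_t\alpha)\hat{\w}_t+\eta_t\X_t\hat{\w}_t$ is exactly the un-normalized update of Algorithm \ref{alg:opm} (with regularization), so $\hat{\w}_{t+1}=\W_{t+1}\hat{\w}_t/\Vert\W_{t+1}\hat{\w}_t\Vert$. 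I would abbreviate $a:=1-\eta_t\alpha$ and $\rho:=\eta_t\Vert\X_t\Vert_2$; since $\Vert\X_t\Vert_2\le\sum_{i=1}^{\ell}\Vert\x_t^{(i)}\Vert^2\le\ell(R+V)^2$, the hypotheses $\eta_t\le\frac{1}{3\ell(R+V)^2}$ and $\alpha\le\ell(R+V)^2$ give $\rho\le\eta_t\ell(R+V)^2\le\frac13$ and $a\ge 1-\eta_t\ell(R+V)^2\ge\frac23$. Because $\X_t\succeq 0$, Weyl's inequality then yields $\lambda_1(\W_{t+1})\ge\hat{\w}_t^\top\W_{t+1}\hat{\w}_t\ge a$ and $\lambda_2(\W_{t+1})\le 0+\lambda_1(\eta_t\X_t)=\rho$, so $\W_{t+1}$ has spectral gap at least $a-\rho>0$ and a simple leading eigenvalue (hence $\w_{t+1}\w_{t+1}^\top$ is well defined).

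I would then establish two estimates. First, $\hat{\w}_t$ is already well aligned with $\w_{t+1}$: applying $\I-\hat{\w}_t\hat{\w}_t^\top$ to the eigen-equation $\W_{t+1}\w_{t+1}=\lambda_1(\W_{t+1})\w_{t+1}$ annihilates the rank-one term, giving $(\I-\hat{\w}_t\hat{\w}_t^\top)\w_{t+1}=\frac{\eta_t}{\lambda_1(\W_{t+1})}(\I-\hat{\w}_t\hat{\w}_t^\top)\X_t\w_{t+1}$, whence
\begin{eqnarray*}
1-(\hat{\w}_t^\top\w_{t+1})^2 = \Vert(\I-\hat{\w}_t\hat{\w}_t^\top)\w_{t+1}\Vert^2 \le \frac{\eta_t^2\Vert\X_t\Vert_2^2}{\lambda_1(\W_{t+1})^2} \le \frac{\rho^2}{a^2}.
\end{eqnarray*}
Second, a one-step power-method bound: writing $\hat{\w}_t=\sum_i c_i\phi_i$ in an orthonormal eigenbasis $\{\phi_i\}$ of $\W_{t+1}$ with $\phi_1=\w_{t+1}$ and eigenvalues $\lambda_1\ge\lambda_2\ge\cdots$, we have $(\hat{\w}_{t+1}^\top\w_{t+1})^2=c_1^2\lambda_1^2/\Vert\W_{t+1}\hat{\w}_t\Vert^2$, so, using $\lambda_i\le\lambda_2\le\rho$ for $i\ge 2$ and $\Vert\W_{t+1}\hat{\w}_t\Vert\ge\hat{\w}_t^\top\W_{t+1}\hat{\w}_t\ge a$,
\begin{eqnarray*}
1-(\hat{\w}_{t+1}^\top\w_{t+1})^2 = \frac{\sum_{i\ge 2}c_i^2\lambda_i^2}{\Vert\W_{t+1}\hat{\w}_t\Vert^2} \le \frac{\rho^2\,(1-c_1^2)}{a^2}.
\end{eqnarray*}
Combining these (note $1-c_1^2=1-(\hat{\w}_t^\top\w_{t+1})^2\le\rho^2/a^2$) gives $1-(\hat{\w}_{t+1}^\top\w_{t+1})^2\le\rho^4/a^4$, and then the elementary identity $\Vert\u\u^\top-\v\v^\top\Vert_F=\sqrt2\sqrt{1-(\u^\top\v)^2}$ for unit vectors yields $\Vert\hat{\w}_{t+1}\hat{\w}_{t+1}^\top-\w_{t+1}\w_{t+1}^\top\Vert_F\le\sqrt2\,\rho^2/a^2$. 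Plugging in $\rho\le\eta_t\ell(R+V)^2$ and $a\ge\frac23$ bounds the right-hand side by $\frac{9\sqrt2}{4}(\eta_t\ell(R+V)^2)^2=\sqrt{81/8}\,(\eta_t\ell(R+V)^2)^2\le\sqrt{33}\,(\eta_t\ell(R+V)^2)^2$, which is the claim (with room to spare).

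The main obstacle I anticipate is not any single step but the bookkeeping of constants, so that the final bound actually lands below $\sqrt{33}$: the sharp alignment estimate $1-(\hat{\w}_t^\top\w_{t+1})^2\le\rho^2/a^2$ read off directly from the eigen-equation (rather than a cruder Davis--Kahan bound of order $\rho/(a-\rho)$), together with the lower bound $\Vert\W_{t+1}\hat{\w}_t\Vert\ge a$ that lets one avoid a separate lower bound on $c_1^2$, are what make the estimate hold on the whole parameter range $\eta_t\le\frac{1}{3\ell(R+V)^2}$, $\alpha\le\ell(R+V)^2$. A minor secondary point is that $\eta_t\alpha\le\frac13<1$ keeps $a$ bounded away from $0$, so the update denominator $\Vert\W_{t+1}\hat{\w}_t\Vert$ is never degenerate.
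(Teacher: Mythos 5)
Your proof is correct and lands comfortably under the stated constant. It shares the paper's overall skeleton -- interpret $\hat{\w}_{t+1}$ as one normalized power-iteration applied to $\W_{t+1}$, reduce the Frobenius distance to $2\bigl(1-(\hat{\w}_{t+1}^{\top}\w_{t+1})^2\bigr)$, and multiply an initial-alignment estimate by a one-step contraction factor -- but the two sub-estimates are obtained by a genuinely different, more self-contained route. For the initial alignment, the paper invokes the Davis--Kahan $\sin\theta$ theorem to get $1-(\hat{\w}_t^{\top}\w_{t+1})^2 \leq 4\eta_t^2\max\{\Vert{\X_t}\Vert^2,\alpha^2\}$, whereas you apply $\I-\hat{\w}_t\hat{\w}_t^{\top}$ to the eigen-equation of $\W_{t+1}$ so that the rank-one term vanishes, yielding the sharper and $\alpha$-free bound $\rho^2/a^2$ with $\rho=\eta_t\Vert{\X_t}\Vert_2$, $a=1-\eta_t\alpha$. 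For the contraction, the paper cites a power-method inequality from an external reference with the factor $\bigl(\lambda_2/\lambda_1\bigr)^2/(\hat{\w}_t^{\top}\w_{t+1})^2$, while you expand $\hat{\w}_t$ in the eigenbasis and lower-bound the normalization by $\Vert{\W_{t+1}\hat{\w}_t}\Vert\geq \hat{\w}_t^{\top}\W_{t+1}\hat{\w}_t\geq a$, which again is at least as tight (since $c_1^2\lambda_1^2\leq\Vert{\W_{t+1}\hat{\w}_t}\Vert^2$) and avoids the external citation. The net effect is a cleaner final constant, $\tfrac{9\sqrt{2}}{4}=\sqrt{81/8}$ versus the paper's $\sqrt{32.4}$, both below $\sqrt{33}$. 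One cosmetic remark: the lemma statement writes $(1+\eta_t\alpha)$ in the definition of $\W_{t+1}$, which is a typo for $(1-\eta_t\alpha)$; you correctly worked with the latter, consistent with Algorithm \ref{alg:opm} and the rest of the analysis.
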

\begin{proof}
Let us denote by $\y_2,\dots,\y_d$ the $(d-1)$ non-leading eigenvectors of the matrix $\W_{t+1}$. 
Since both $\w_{t+1}, \hat{\w}_{t+1}$ are unit vectors, we have that
\begin{eqnarray}\label{eq:opmApprox:1}
\Vert{\hat{\w}_{t+1}\hat{\w}_{t+1} -\w_{t+1}\w_{t+1}^{\top}}\Vert_F^2 = 2\left({1 - (\hat{\w}_{t+1}^{\top}\w_{t+1})^2}\right) = 2\sum_{i=2}^d(\hat{\w}_{t+1}^{\top}\y_i)^2.
\end{eqnarray}

Note that by the update rule of Algorithm \ref{alg:opm} and since $\hat{\w}_t$ is a unit vector, the vector $\hat{\w}_{t+1}$ could be written as
\begin{eqnarray*}
\hat{\w}_{t+1}&=& \frac{\hat{\w}_t + \eta_t(\X_t\hat{\w}_t-\alpha\hat{\w}_t)}{\Vert{\hat{\w}_t + \eta_t(\X_t\hat{\w}_t-\alpha\hat{\w}_t)}\Vert} \\
&=&
\frac{\left({\hat{\w}_t\hat{\w}_t^{\top} + \eta_t(\X_t-\alpha\hat{\w}_t\hat{\w}_t^{\top})}\right)\hat{\w}_t}{\Vert{\left({\hat{\w}_t\hat{\w}_t^{\top} + \eta_t(\X_t-\alpha\hat{\w}_t\hat{\w}_t^{\top})}\right)\hat{\w}_t}\Vert}  =
\frac{\W_{t+1}\hat{\w}_t}{\Vert{\W_{t+1}\hat{\w}_t}\Vert}.
\end{eqnarray*}
Hence, $\hat{\w}_{t+1}$, is the result of applying a single iteration of the well known Power Method for leading eigenvector computation, initialized with the vector $\hat{\w}_t$, to the matrix $\W_{t+1}$. Let us denote by $\y_2,\dots,\y_d$ the $(d-1)$ non-leading eigenvectors of $\W_{t+1}$. Using standard arguments, see for instance Eq. (18) in \cite{G15}, we have that

\begin{eqnarray*}
\sum_{i=2}^d(\hat{\w}_{t+1}^{\top}\y_i)^2 \leq \frac{\sum_{i=2}^d(\hat{\w}_t^{\top}\y_i)^2}{(\hat{\w}_t^{\top}\w_{t+1})^2}\left({\frac{\lambda_2(\W_{t+1})}{\lambda_1(\W_{t+1})}}\right)^2.
\end{eqnarray*}

Since $\hat{\w}_{t}$ is a unit vector, we have that $\sum_{i=2}^d(\hat{\w}_{t}^{\top}\y_i)^2 = 1 - (\hat{\w}_{t}^{\top}\w_{t+1})^2$. Moreover, we can bound 
\begin{eqnarray*}
\frac{\lambda_2(\W_{t+1})}{\lambda_1(\W_{t+1})} &=& \frac{\lambda_2((1-\eta_t\alpha)\hat{\w}_t\hat{\w}_t^{\top}+\eta_t\X_t)}{\lambda_1((1-\eta_t\alpha)\hat{\w}_t\hat{\w}_t^{\top}+\eta_t\X_t)}\\
&\leq & \frac{\lambda_2((1-\eta_t\alpha)\hat{\w}_t\hat{\w}_t^{\top}) + \lambda_1(\eta_t\X_t)}{\lambda_1((1-\eta_t\alpha)\hat{\w}_t\hat{\w}_t^{\top})} 
\leq \frac{\eta_t\Vert{\X_t}\Vert}{1-\eta_t\alpha},
\end{eqnarray*}
where the first inequality follows from Weyl's inequality for the eigenvalues. Thus, plugging-in into Eq. \eqref{eq:opmApprox:1}, we have that

\begin{eqnarray}\label{eq:opmApprox:2}
\Vert{\hat{\w}_{t+1}\hat{\w}_{t+1} -\w_{t+1}\w_{t+1}^{\top}}\Vert_F^2 \leq 2\frac{1 - (\hat{\w}_t^{\top}\w_{t+1})^2}{(\hat{\w}_t^{\top}\w_{t+1})^2}\frac{\eta_t^2\Vert{\X_t}\Vert^2}{(1-\eta_t\alpha)^2}.
\end{eqnarray}

Using the Davis-Kahan sin$\theta$ theorem (see for instance Theorem 4 in \cite{G15online}), we have that
\begin{eqnarray*}
1 - (\hat{\w}_t^{\top}\w_{t+1})^2 &=& \frac{1}{2}\Vert{\hat{\w}_{t}\hat{\w}_{t}^{\top} -\w_{t+1}\w_{t+1}^{\top}}\Vert_F^2 \\
&\leq &\frac{4\Vert{\hat{\w}_{t}\hat{\w}_{t}^{\top}-\W_{t+1}}\Vert^2}{(\lambda_1(\hat{\w}_t\hat{\w}_t^{\top}) - \lambda_2(\hat{\w}_t\hat{\w}_t^{\top}))^2} \\
&=& \frac{4\Vert{\eta_t\alpha\hat{\w}_t\hat{\w}_t^{\top}-\eta_t\X_t}\Vert^2}{\lambda_1(\hat{\w}_t\hat{\w}_t^{\top})^2}\leq 
4\eta_t^2\max\{\Vert{\X_t}\Vert^2,\alpha^2\}.
\end{eqnarray*}

Plugging back into Eq. \eqref{eq:opmApprox:2} and using the fact that $\eta_t \leq \frac{1}{3\max\{\Vert{\X_t}\Vert,\alpha\}}$ (see bound on $\Vert{\X_t}\Vert$ in Eq. \eqref{eq:normBound}), we can conclude that
\begin{eqnarray*}
\Vert{\hat{\w}_{t+1}\hat{\w}_{t+1} -\w_{t+1}\w_{t+1}^{\top}}\Vert_F^2 &\leq& \frac{8\eta_t^4\Vert{\X_t}\Vert^2\max\{\Vert{\X_t}\Vert^2,\alpha^2\}}{(1-4\eta_t^2\max\{\Vert{\X_t}\Vert^2,\alpha^2\})(1-\eta_t\alpha)^2}\\
& {\leq} &\frac{8\eta_t^4\Vert{\X_t}\Vert^2\max\{\Vert{\X_t}\Vert^2,\alpha^2\}}{\left({1-\frac{4}{9}}\right)\left({1-\frac{1}{3}}\right)^2}\\
%& {\leq} &\frac{5\eta_t^4\Vert{\X_t}\Vert^2\max\{\Vert{\X_t}\Vert^2,\alpha^2\}}{(1-1/\sqrt{10})^2}\\
&{\leq} &33(\eta_t\ell(R+V)^2)^4,
\end{eqnarray*}
where all inequalities follow from our assumptions on $\eta_t,\alpha$ and the bound \eqref{eq:normBound}.
\end{proof}

\begin{lemma}[Matrix Hoeffding]\label{lem:matHoff}
Under the conditions of Assumption \ref{ass:dist}, it holds for all $t\in[T]$ and for all $\epsilon > 0$ that
\begin{eqnarray*}
\Pr\left({\Vert{\frac{1}{\ell}\D_t}\Vert\geq \epsilon}\right) \leq 2d\cdot\exp\left({-\frac{\epsilon^2\ell}{128R^4}}\right).
\end{eqnarray*}
\end{lemma}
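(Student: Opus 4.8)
The plan is to write $\D_t$ as a sum of $\ell$ independent, centered, symmetric random matrices and to apply a standard Matrix Hoeffding tail bound (e.g.\ Tropp, \emph{User-Friendly Tail Bounds for Sums of Random Matrices}). Fix $t\in[T]$ and for each $i\in[\ell]$ set
\[
\Z_i := \q_t^{(i)}\q_t^{(i)\top} - \Q + \q_t^{(i)}\v_t^{(i)\top} + \v_t^{(i)}\q_t^{(i)\top},
\]
so that, by the definitions of $\Q_t,\M_t,\D_t$, we have $\D_t = \Q_t - \ell\cdot\Q + \M_t = \sum_{i=1}^{\ell}\Z_i$. Each $\Z_i$ is symmetric; the $\Z_i$ are mutually independent because the $\q_t^{(i)}$ are i.i.d.\ draws from $\mD$ while the $\v_t^{(i)}$ are fixed; and each $\Z_i$ is centered because $\E_{\q\sim\mD}[\q\q^\top]=\Q$ and $\E_{\q\sim\mD}[\q]=\mathbf 0$ (Items (3)--(4) of Assumption \ref{ass:dist}), so in particular $\E[\q_t^{(i)}\v_t^{(i)\top}]=\mathbf 0$.

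Next I would prove an almost-sure bound on the spectral norm of each summand. Since $\Vert{\q_t^{(i)}}\Vert\leq R$ and $\Vert{\v_t^{(i)}}\Vert\leq V$ (Items (1)--(2)), and $\Vert{\Q}\Vert=\lambda_1(\Q)=\E_{\q\sim\mD}[(\x^{\top}\q)^2]\leq\E_{\q\sim\mD}\Vert{\q}\Vert^2\leq R^2$, the triangle inequality together with $\Vert{\q\v^{\top}+\v\q^{\top}}\Vert\leq 2\Vert{\q}\Vert\,\Vert{\v}\Vert$ yields
\[
\Vert{\Z_i}\Vert \;\leq\; \Vert{\q_t^{(i)}\q_t^{(i)\top}}\Vert + \Vert{\Q}\Vert + 2\Vert{\q_t^{(i)}}\Vert\,\Vert{\v_t^{(i)}}\Vert \;\leq\; R^2 + R^2 + 2RV \;\leq\; 4R^2,
\]
where the last inequality uses $V\leq R$; this follows from Item (4), since $\delta(\Q)\leq\lambda_1(\Q)$ forces $V\sqrt{2\lambda_1(\Q)}\leq\lambda_1(\Q)$, i.e.\ $V^2\leq\tfrac{1}{2}\lambda_1(\Q)\leq R^2$. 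Hence $\Z_i^2\preceq 16R^4\,\I$ almost surely for every $i\in[\ell]$.

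Then I would invoke the Matrix Hoeffding inequality: for independent, centered, symmetric $d\times d$ random matrices $\Z_i$ with $\Z_i^2\preceq A_i^2$ a.s., one has $\Pr\!\big(\lambda_{\max}(\sum_i\Z_i)\geq s\big)\leq d\exp\!\big(-s^2/(8\sigma^2)\big)$ with $\sigma^2:=\Vert{\sum_i A_i^2}\Vert$. Taking $A_i^2=16R^4\,\I$ gives $\sum_{i=1}^{\ell}A_i^2=16\ell R^4\,\I$ and hence $8\sigma^2=128\ell R^4$. Applying this bound to $\sum_i\Z_i$ and to $\sum_i(-\Z_i)$ (which satisfies the same hypotheses) and taking a union bound controls $\Vert{\D_t}\Vert=\max\{\lambda_{\max}(\D_t),\,-\lambda_{\min}(\D_t)\}$, giving $\Pr\!\big(\Vert{\D_t}\Vert\geq s\big)\leq 2d\exp\!\big(-s^2/(128R^4\ell)\big)$. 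Setting $s=\epsilon\ell$ and using $\Pr\!\big(\Vert{\tfrac{1}{\ell}\D_t}\Vert\geq\epsilon\big)=\Pr\!\big(\Vert{\D_t}\Vert\geq\epsilon\ell\big)$ yields exactly $2d\exp\!\big(-\epsilon^2\ell/(128R^4)\big)$, as claimed.

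I do not anticipate a genuine obstacle: this is a textbook matrix-concentration argument. The only points that need (mild) care are verifying that the summands are genuinely centered and independent --- if one wishes to allow an adaptive adversary, one should note that $\v_t^{(i)}$ may be taken measurable with respect to data revealed strictly before $\q_t^{(i)}$, so that $\q_t^{(i)}$ is still conditionally zero-mean and a martingale (matrix Azuma/Bernstein) variant applies with the same variance proxy --- and tracking constants so that $8\sigma^2$ lands precisely at $128R^4\ell$, which is why the crude estimate $\Vert{\Z_i}\Vert\leq 4R^2$ is the natural one to use.
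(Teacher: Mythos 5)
Your proof is correct and follows essentially the same route as the paper's: a direct application of the Matrix Hoeffding inequality to the per-sample i.i.d.\ decomposition of $\D_t$, landing on the identical constant $128R^4$. The only cosmetic difference is that the paper applies the inequality separately to $\Q_t-\ell\cdot\Q$ and to $\M_t$ with parameter $\epsilon/2$ each and union-bounds the two events (using $V^2\le R^2$), whereas you fold both contributions into a single centered summand $\Z_i$ with $\Vert{\Z_i}\Vert\le 4R^2$ and union-bound the two tails.
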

\begin{proof}
By a straightforward application of the Matrix Hoeffding inequality (see for instance \cite{Tropp12}), we have for any fixed $t\in[T]$ that
\begin{eqnarray*}
&\Pr\left({\Vert{\frac{1}{\ell}(\Q_t-\ell\cdot\Q)}\Vert\geq \epsilon}\right) \leq  d\cdot\exp\left({-\frac{\epsilon^2\ell}{32R^4}}\right),&\\
&\Pr\left({\Vert{\frac{1}{\ell}\M_t}\Vert\geq \epsilon}\right) \leq d\cdot\exp\left({-\frac{\epsilon^2\ell}{32V^2R^2}}\right).&
\end{eqnarray*}
Thus, the lemma follows from applying both of the above bounds with parameter $\epsilon/2$ and noting that $V^2 \leq R^2$.
\end{proof}

We can now finally prove Theorem \ref{thm:main} and Theorem \ref{thm:mainLog}.

\begin{proof}[Proof of Theorem \ref{thm:main}]

The proof follows from straightforward application of the tools we have developed thus-far.

We assume for simplicity that $N = T\cdot\ell$ for our choice of $\ell$. Note this is without loss of generality, since the remainder $(N - \ell\cdot\lfloor{N/\ell}\rfloor)$ affects the bound in the theorem only via lower-order terms.

Let us define $\epsilon := \frac{\delta(\Q)^2 - V^2(V^2+2\lambda_1(\Q))}{72\lambda_1(\Q)}$, and note this choice corresponds to Eq. \eqref{eq:epsilonBound}. Thus, for a certain $\ell = O(R^4\epsilon^{-2}\log\frac{dT}{p})$, we have by an application of Lemma \ref{lem:matHoff} that with probability at least $1-p$ it holds for all $t\in[T]$ that $\frac{1}{\ell}\Vert{\D_t}\Vert \leq \epsilon$. Define a constant approximation parameter $\gamma_t = \gamma = \sqrt{33}(\eta\ell(R+V)^2)^2$ (which corresponds to the bound in Lemma \ref{lem:opmApprox}), where $\eta =\frac{1}{\sqrt{T}\ell(R+V)^2}$ is the fixed chosen learning rate stated in the theorem. Note that for $N$ large enough, all parameters $\epsilon,\eta,\gamma,\hat{\w}_1$ satisfy the conditions of Lemma \ref{lem:goodProj} and Lemma \ref{lem:opmApprox} with probability at least $1-p$, and thus, by invoking Lemma \ref{lem:metaAlgConv} (with $\alpha=0$ and constant $\gamma,\eta$), we have that with probability at least $1-p$ that
\begin{align*}
\lambda_1\left({\sum_{i=1}^N\x_i\x_i^{\top}}\right) - \sum_{t=1}^T\sum_{i=1}^{\ell}(\hat{\w}_t^{\top}\x_t^{(i)})^2 &\leq 
 \frac{1}{\eta} +T\left({\frac{3\sqrt{2}}{2}\frac{\gamma}{\eta} + \frac{\eta}{2}\ell^2(R+V)^4}\right) \\
 &\underset{(a)}{=}  \frac{1}{\eta} +T\left({\frac{3\sqrt{66}}{2}\eta\ell^2(R+V)^4 + \frac{\eta}{2}\ell^2(R+V)^4}\right) \\
 &=  \frac{1}{\eta} +\frac{3\sqrt{66}+1}{2}T\eta\ell^2(R+V)^4 \\
 &\underset{(b)}{=} O\left({\sqrt{T}\ell(R+V)^2}\right)\\
 &\underset{(c)}{=} O\left({\sqrt{N\ell}(R+V)^2}\right),
\end{align*}
where (a) follows from plugging the value of $\gamma$, (b) follows form plugging the value of $\eta$ and (c) follows since $N = T\cdot\ell$. The theorem now follows from plugging-in the bound on $\ell$.
\end{proof}

\begin{proof}[Proof of Theorem \ref{thm:mainLog}]
As in the proof of Theorem \ref{thm:main} we assume for simplicity that $N = T\cdot\ell$ for our choice of $\ell$. We define $\epsilon$ and choose block-length $\ell$ exactly as in the proof of Theorem \ref{thm:mainLog}, which implies that with probability at least $1-p$ it holds for all $t\in[T]$ that $\frac{1}{\ell}\Vert{\D_t}\Vert \leq \epsilon$. We set the regularization parameter to $\alpha = \frac{\ell}{10(\delta(\Q)+V^2)}\left({\delta(\Q)^2-V^4-2V^2\lambda_1(\Q)}\right)$ which agrees with the requirements of Lemma \ref{lem:goodProj} and Lemma \ref{lem:opmApprox}. We set the approximation parameter $\gamma_t$ to $\gamma_t =  \sqrt{33}(\eta_t\ell(R+V)^2)^2$ (which corresponds to the bound in Lemma \ref{lem:opmApprox}). Finally, we set the learning rate on each iteration $t$ to $\eta_t = \frac{1}{\alpha{}t+T_0}$, for
\begin{eqnarray*}
T_0 = \max\{\frac{4\ell\lambda_1(\Q)(V^2+4\epsilon)}{\epsilon},~\ell(R+V)^2,~72\ell\lambda_1(\Q),~\frac{\ell(R+V)^4}{\epsilon}\}.
\end{eqnarray*}
Note that this choice agrees with the requirements of Lemma \ref{lem:goodProj} and Lemma \ref{lem:opmApprox} with respect to both sequences  $\{\eta_t\}_{t\geq 1}$ and $\{\gamma_t\}_{t\geq 1}$. 

Thus, with the choice of initialization $\hat{\w}_1$ stated in the theorem, it follows all parameters $\epsilon,\alpha, \{\eta_t\}_{t\in[T]},\{\gamma_t\}_{t\in[T]},\hat{\w}_1$ satisfy the conditions of Lemma \ref{lem:goodProj} and Lemma \ref{lem:opmApprox} with probability at least $1-p$, and thus, by invoking Lemma \ref{lem:metaAlgConv}, we have that with probability at least $1-p$ that
\begin{align*}
&\lambda_1\left({\sum_{i=1}^N\x_i\x_i^{\top}}\right) - \sum_{t=1}^T\sum_{i=1}^{\ell}(\hat{\w}_t^{\top}\x_t^{(i)})^2 \underset{(a)}{\leq}\\
 & \frac{1}{2}\sum_{t=2}^T\left({(\alpha{}t+T_0)-\alpha - ((t-1)\alpha+T_0)}\right)\Vert{\hat{\w}_t\hat{\w}_t^{\top} - \w\w^{\top}}\Vert_F^2 \\
&+\frac{1}{\eta_1} +\sum_{t=1}^T\left({\frac{3\sqrt{2}}{2}\frac{\gamma_t}{\eta_t} + \frac{\eta_t}{2}( \ell^2(R+V)^4+\alpha^2)}\right) =\\
&\frac{1}{\eta_1} +\sum_{t=1}^T\left({\frac{3\sqrt{2}}{2}\frac{\gamma_t}{\eta_t} + \frac{\eta_t}{2}( \ell^2(R+V)^4+\alpha^2)}\right) \underset{(b)}{=}\\
& (\alpha+T_0) + \left({\frac{3\sqrt{66}\ell^2(R+V)^4}{2} + \frac{\ell^2(R+V)^4+\alpha^2}{2}}\right)\sum_{t=1}^T\eta_t =\\
&(\alpha+T_0) + \frac{(3\sqrt{66}+1)\ell^2(R+V)^4 + \alpha^2}{2}\sum_{t=1}^T\frac{1}{\alpha{}t+T_0} \leq \\
&(\alpha+T_0) + \frac{(3\sqrt{66}+1)\ell^2(R+V)^4 + \alpha^2}{2\alpha}\sum_{t=1}^T\frac{1}{{}t} \leq \\
&T_0+\frac{(3\sqrt{66}+1)\ell^2(R+V)^4 + 3\alpha^2}{2\alpha}\left({1 +\log{}T}\right),
\end{align*}
where (a) follows from Lemma \ref{lem:metaAlgConv} and our choice of learning rate, and (b) follows from our choice of $\{\gamma_t\}_{t\in[T]}$.

Note that 
\begin{eqnarray*}
\alpha = O\left({\frac{\ell}{\delta(\Q)+V^2}\left({\delta^2(\Q) - V^4}\right)}\right) = O\left({\ell(\delta(\Q)+V^2)}\right) = O(\ell(R+V)^2),
\end{eqnarray*}
where in the last equality we have used the fact that $\delta(\Q) \leq \lambda_1(\Q) \leq R^2$. Also, by simple calculations we can see that $T_0 = O\left({\frac{\ell(R+V)^4}{\epsilon}}\right)$. Thus, we have that
\begin{align*}
\lambda_1\left({\sum_{i=1}^N\x_i\x_i^{\top}}\right) - \sum_{t=1}^T\sum_{i=1}^{\ell}(\hat{\w}_t^{\top}\x_t^{(i)})^2 =
O\left({\frac{\ell(R+V)^4}{\epsilon}+\frac{\ell^2(R+V)^4}{\alpha}\log{}T}\right).
\end{align*}
Note that $\alpha = \Theta\left({\frac{\ell\epsilon\lambda_1(\Q)}{\delta(\Q)+V^2}}\right)$ and recall that $\ell = O\left({R^4\epsilon^{-2}\log\frac{dT}{p}}\right)$. Plugging these values we obtain
\begin{align*}
\lambda_1\left({\sum_{i=1}^N\x_i\x_i^{\top}}\right) - \sum_{t=1}^T\sum_{i=1}^{\ell}(\hat{\w}_t^{\top}\x_t^{(i)})^2 &=
O\left({\frac{R^4(R+V)^4\log\frac{dT}{p}}{\epsilon^3}\left({1+\frac{\delta(\Q)+V^2}{\lambda_1(\Q)}\log{}T}\right)}\right) \\
& = O\left({\frac{R^4(R+V)^4(\delta(\Q)+V^2)}{\epsilon^3\lambda_1(\Q)}\log(T)\log\left({\frac{dT}{p}}\right)}\right).
\end{align*}
Finally, plugging the value of $\epsilon$ we obtain the regret bound:
\begin{eqnarray*}
O\left({\frac{R^4(R+V)^4(\delta(\Q)+V^2)\lambda_1(\Q)^2}{\left({\delta(\Q)^2-V^4-2\lambda_1(\Q)V^2}\right)^3}\log(T)\log\left({\frac{dT}{p}}\right)}\right),
\end{eqnarray*}
and the theorem follows.
\end{proof}

\subsection{Proof of Lemma \ref{lem:warmstart} ("warm-start")}
\begin{proof}
Let $\hat{\w}_1$ be the leading eigenvector of the normalized covariance $\hat{\X} = \frac{1}{n}\sum_{i=1}^n\x_i\x_i^{\top}$, where for all $i\in[n]$ $\x_i = \q_i + \v_i$. Clearly, $\E[\hat{\X}] = \Q + \frac{1}{n}\sum_{i=1}^n\v_i\v_i^{\top}$, and thus
\begin{eqnarray*}
\Vert{\Q-\hat{\X}}\Vert^2 \leq 2\Vert{\Q-\E[\hat{\X}]}\Vert^2 + 2\Vert{\hat{\X}-\E[\hat{\X}]}\Vert^2 = 2\Vert{\frac{1}{n}\sum_{i=1}^n\v_i\v_i^{\top}}\Vert^2 + 2\Delta^2,
\end{eqnarray*}
where we use the notation $\Delta = \Vert{\hat{\X}-\E[\hat{\X}]}\Vert$. Via the Davis-Kahan $\sin\theta$ theorem (see for instance Theorem 4 in \cite{G15online}) and using the short notation $\delta = \delta(\Q)$, we have that
\begin{eqnarray*}
(\hat{\w}_1^{\top}\x)^2  = 1 - \frac{1}{2}\Vert{\hat{\w}_1\hat{\w}_1^{\top}-\x\x^{\top}}\Vert_F^2 &\geq &1- 4\frac{\Vert{\hat{\X}-\Q}\Vert^2}{\delta^2}\\
&\geq &1 - 8\frac{V^4}{\delta^2} - 8\frac{\Delta^2}{\delta^2}.
\end{eqnarray*}
Now, using the short notation $\lambda_1=\lambda_1(\Q)$, the requirement 
\begin{eqnarray}\label{eq:WS:req}
(\hat{\w}_1^{\top}\x)^2 \geq 1 - c\frac{\delta-V^2}{2\lambda_1}
\end{eqnarray}
boils down to the condition
\begin{eqnarray*}
8\frac{V^4+\Delta^2}{\delta^2} \leq c\frac{\delta - V^2}{2\lambda_1} ~ \Longleftrightarrow ~
16\lambda_1V^4 + c\delta^2V^2 + 16\lambda_1\Delta^2-c\delta^3 \leq 0.
\end{eqnarray*}
Solving the above inequality on the right for $V^2$, we obtain that \eqref{eq:WS:req} holds when $V^2$ is in the interval:
\begin{eqnarray*}
0 \leq V^2 \leq \frac{-c\delta^2 +\sqrt{c^2\delta^4 + 64\lambda_1c\delta^3-1024\lambda_1^2\Delta^2 }}{32\lambda_1}.
\end{eqnarray*}

In particular, for 
\begin{eqnarray}\label{eq:WS:DeltaBound}
\Delta \leq \frac{\sqrt{19c}\delta^{3/2}}{32\sqrt{\lambda_1}}
\end{eqnarray}
 we obtain that \eqref{eq:WS:req} holds for
\begin{eqnarray*}
0 \leq V^2 \leq \frac{-c\delta^2+\sqrt{c^2\delta^4+45\lambda_1c\delta^3}}{32\lambda_1}.
%0 \leq V^2 \leq \frac{-c\delta^2+\sqrt{c^2\delta^4+32\lambda_1c\delta^3}}{32\lambda_1}\leq \frac{-c\delta^2+\sqrt{c^2\delta^4}+\frac{32\lambda_1c\delta^3}{2\sqrt{c^2\delta^4}}}{32\lambda_1} = \frac{\delta}{2},
%0 \leq V^2 \leq \frac{\delta^{3/2}}{4\sqrt{\lambda_1}},
\end{eqnarray*}

Note that since $c\in(0,1]$ and $\lambda_1\geq\delta$, we have that $\lambda_1c\delta^3 \geq c^2\delta^4$. Note also that $\sqrt{45} > \sqrt{32}+1$. Thus, we have that \eqref{eq:WS:req} holds for $V^2$ in the interval:
\begin{eqnarray*}
0 \leq V^2 \leq \frac{\sqrt{32c\lambda_1\delta^3}}{32\lambda_1} = \frac{\sqrt{c}\delta^{3/2}}{4\sqrt{2}\sqrt{\lambda_1}}.
\end{eqnarray*}

%which is equivalent to the requirement $\delta(\Q) \geq \left({16\lambda_1(\Q)V^4}\right)^{1/3}$.
%where the last inequality holds since $\sqrt{a+b}\leq \sqrt{a} + \frac{b}{2\sqrt{a}}$ for all $a,b \geq 0$.

We conclude the proof with the simple observation that using a standard Matrix Hoeffding concentration bound (see for instance Lemma \ref{lem:matHoff}), it suffices to take $n = O\left({\frac{R^4\lambda_1\log(d/p)}{c\delta^3}}\right)$ for the bound in \eqref{eq:WS:DeltaBound} to hold with probability at least $1-p$.

\end{proof}

\section{Experiments}
In this section we present empirical evidence which exhibit the convergence of our proposed algorithms \ref{alg:opm}, \ref{alg:1} in practice and supports our theoretical claims.

We test the following algorithms. Algorithm \ref{alg:1} with block-size $\ell =1$, where $\hat{\w}_{t+1}$ is computed via rank-one SVD (R1-OGA), a similar algorithm which uses non-unit block-size $\ell > 1$ (BR1-OGA), the non-convex online gradient ascent, Algorithm \ref{alg:opm}, with unit block-size $\ell=1$ (Nonconvex-OGA), and the \textit{convex} online gradient ascent algorithm \cite{zinkevich03,Hazan16} (equivalent to Algorithm \ref{alg:1}, but uses accurate Euclidean projections onto the spectrahedron) with unit block-size (Conv-OGA). Since computing that exact projection for Conv-OGA via a full SVD is highly time consuming, we approximate it by extracting only the five leading components. Finally, we record the regret of the initial ``warm-start" vector $\hat{\w}_1$ (BaseVec), which serves as the initialization for all algorithms. For all datasets we plot for each iteration $t$ the average-regret up to time $t$ against the leading eigenvector in hindsight (which is computed w.r.t. all data). For all algorithms introduced in this paper (that is, R1-OGA, BR1-OGA, Nonconvex-OGA) we focus for simplicity on the non-regularized version (i.e., we set $\alpha=0$).

We consider the following three datasets. 

\noindent\textbf{Synthetic:} a random dataset is constructed by generating Gaussian zero-mean data with a random covariance matrix $\Q$ with eigenvalues $\lambda_i = 15\cdot{}0.3^{i-1}$ for all $i\in[d]$, and perturbing them using independent Gaussian zero-mean noise with random covariance matrix $\V$ with eigenvalues $\mu_i = 3\cdot{}0.3^{i-1}$ for all $i\in[d]$, where we use $d=100$. We set the number of data points to $N=10000$, and we compute the initialization $\hat{\w}_1$ for all algorithms by computing the leading eigenvector of a sample of size $100$ (i.e., 1\% of $N$) based on samples from the covariance $\Q$ only. For the algorithm BR1-OGA we set $\ell = 10$. We average the results of 30 i.i.d. experiments.

\noindent\textbf{MNIST:} we use the training set of the MNIST handwritten digit recognition dataset \cite{Lecun1998} which contains 60000 28x28 images, which we split into $N=59400$ images for testing, while $600$ images (i.e., 1\% of data) are used to compute the initialization $\hat{\w}_1$. For the algorithm BR1-OGA we set $\ell = 5$.

\noindent\textbf{CIFAR10:} we use the CIFAR10 tiny image dataset \cite{cifar-10} which contains 50000 32x32 images in RGB format. We convert the images to grayscale and use $N=49900$ images for testing and $100$ images (i.e., 0.2\% of data) are used to compute the initialization. For BR1-OGA we set $\ell = 5$.

The results for all three datasets are given in figure \ref{fig:1}. It can be seen that indeed all algorithms improve significantly over the ``warm-start" base vector. We also see that all algorithms indeed attain low average-regret, and in particular are competitive with OGA which follows a convex approach (up to the approximation of the projection via thin SVD).

To further examine the applicability of our theoretical approach, for all datasets, we recorded for algorithm BR1-OGA the fraction of projection errors, i.e., the precent of number of iterations $t$ on which the projection of the matrix $\W_{t+1}=\hat{\w}_t\hat{\w}_t^{\top}+\eta\X_t$ onto the spectrahedron $\mS$ is not a rank-one matrix. The results are 6.24\%, 0.26\%, 0\%, for synthetic, MNIST and CIFAR10, respectively. These low error rates indeed support our theoretical analysis which hinges on showing that under our data model (recall Assumption \ref{ass:dist}) and given a "warm-start'" initialization, the projections of the matrices $\W_t$ in Algorithm \ref{alg:1} are always rank-one.

\begin{figure*}
    \centering
    \begin{subfigure}{0.32\textwidth}
        \includegraphics[width=2.13in]{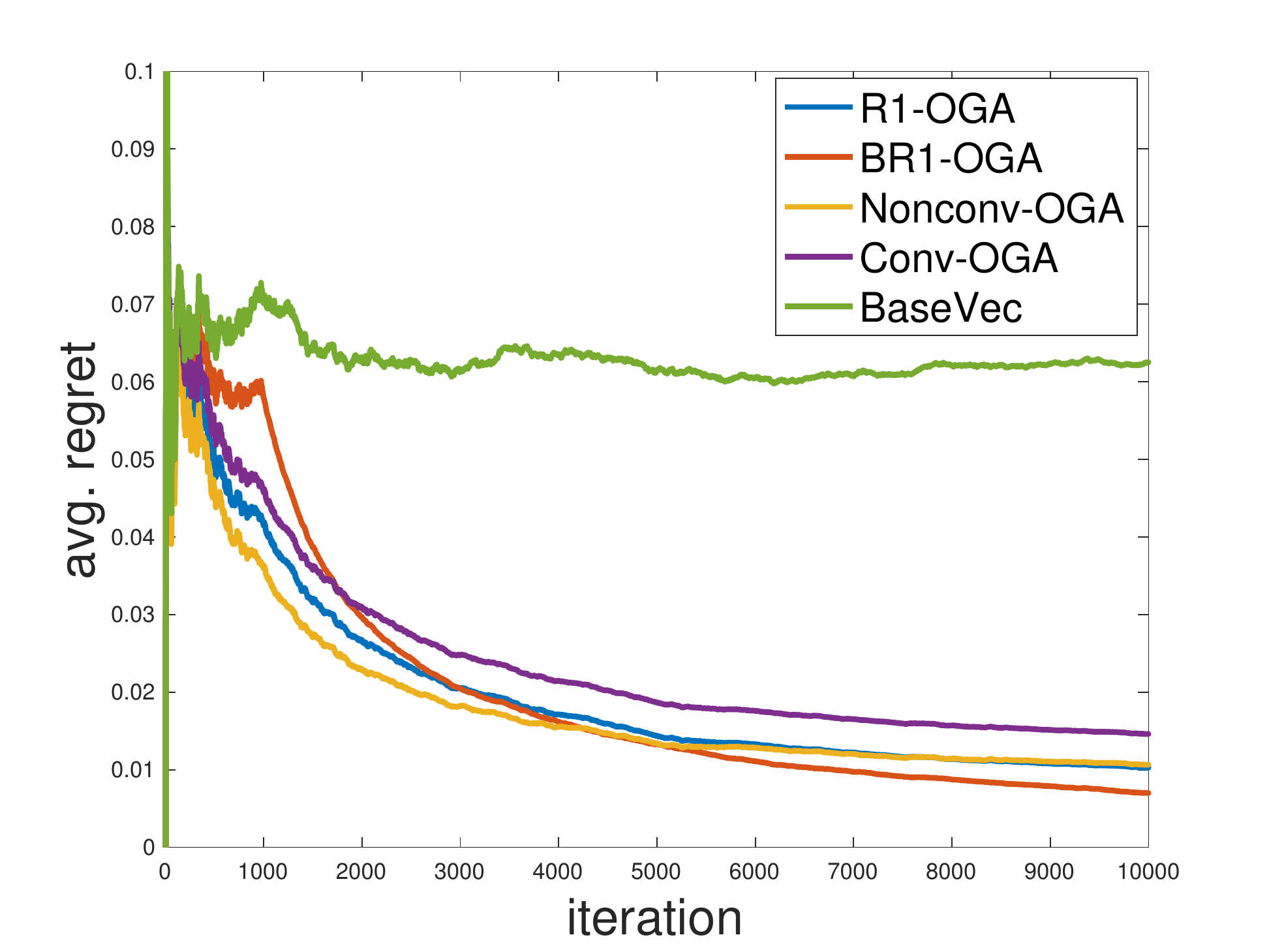}
        \caption{synthetic}
     \end{subfigure}
          \begin{subfigure}{0.32\textwidth}
        \includegraphics[width=2.13 in]{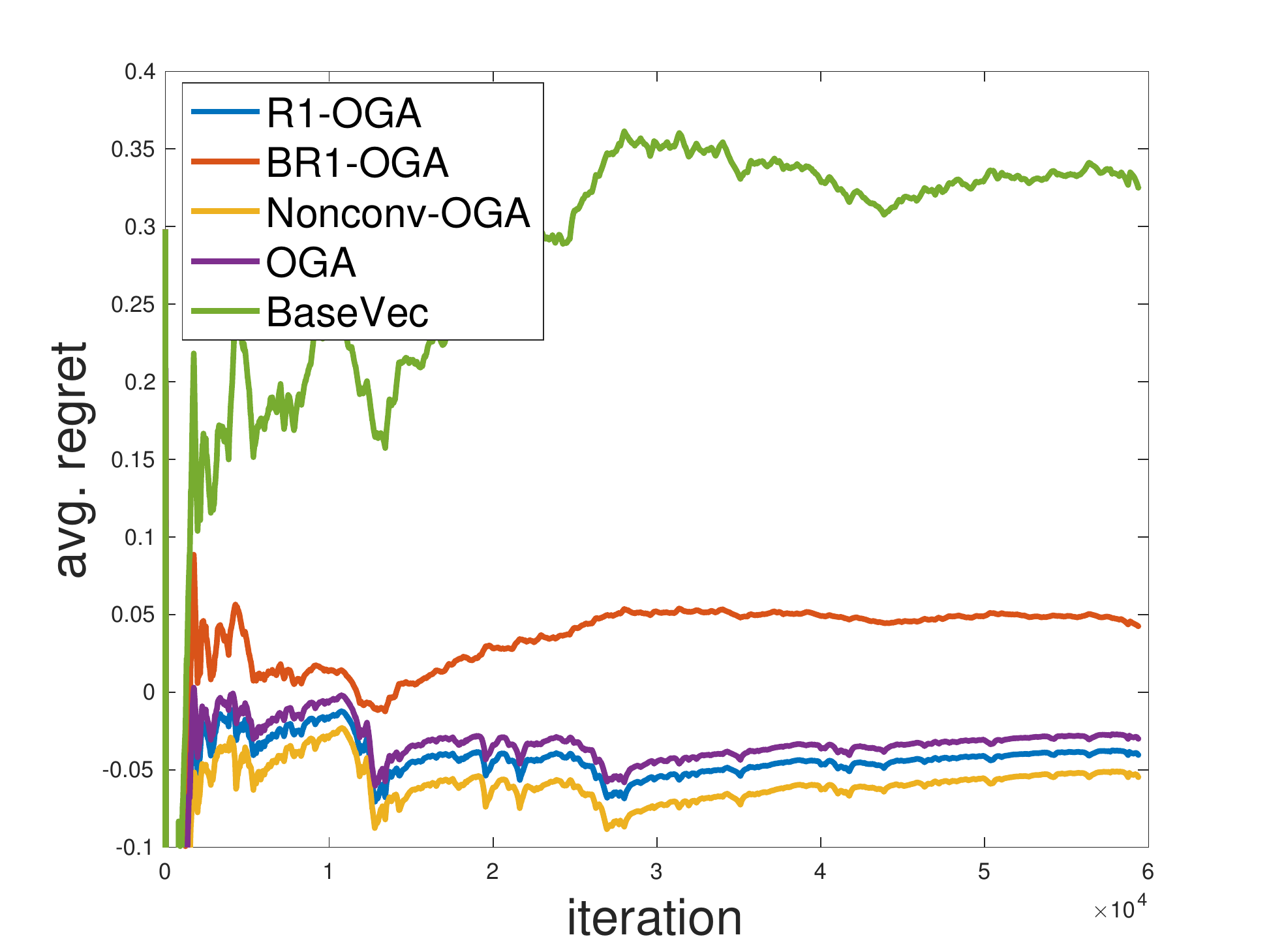}
        \caption{MNIST}
      \end{subfigure}
    \begin{subfigure}{0.32\textwidth}
        \includegraphics[width=2.13in]{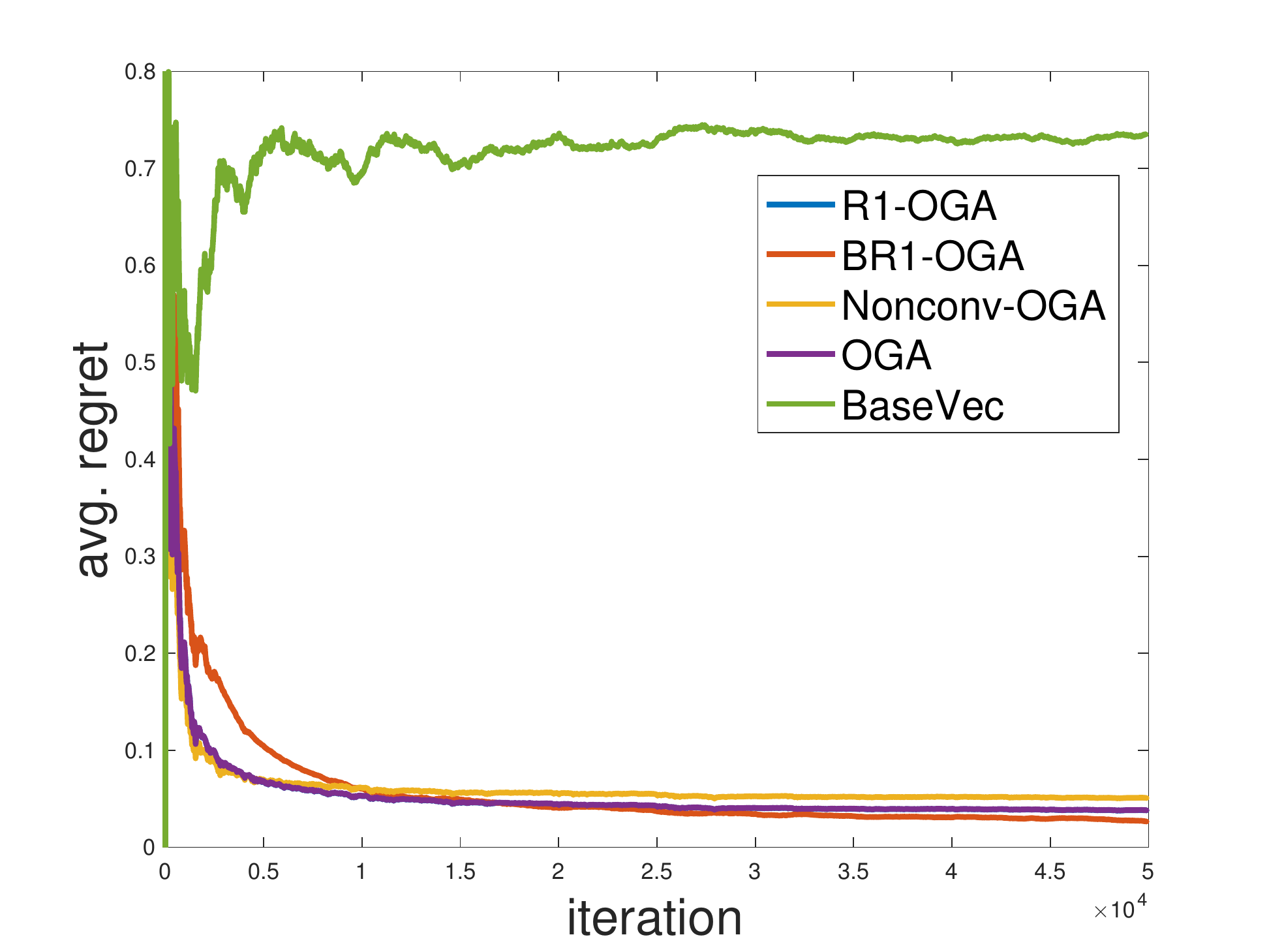}
        \caption{CIFAR10}
          \end{subfigure}    
    \caption{Average-regret of tested algorithms.}\label{fig:1}
\end{figure*}

\section{Discussion}
In this paper we took a step forward towards understanding the ability of highly-efficient non-convex online algorithms to minimize regret in adversarial online learning settings. We focused on the particular problem of online principal component analysis with $k=1$, and showed that under a "semi-adversarial" model, in which the data follows a stochastic distribution with adversarial perturbations, and  given a ``warm-start" initialization, the natural nonconvex online gradient ascent indeed guarantees sublinear regret. Our theory is further supported by empirical evidence.

We hope this work will motivate further research on online nonconvex optimization with global convergence guarantees. Future directions of interest may include extending our analysis to a wider regime of parameters, and extracting $k$ principal components at once. Also, it is interesting if in the standard adversarial setting, it can be shown that online nonconvex gradient ascent achieves low-regret, or on the other-hand, to show that there exist instances on which it cannot guarantee non-trivial regret. 
Finally, moving beyond PCA, other online learning problems of interest that may benefit from a non-convex approach include online matrix completion \cite{Hazan12, jin2016provable}, and of course, provable online learning of deep networks.

\bibliographystyle{plain}
\bibliography{bib}

\end{document}